\definecolor{darkgreen}{rgb}{0,0.5,0}
\definecolor{darkred}{rgb}{0.7,0,0}
\definecolor{teal}{rgb}{0.3,0.8,0.8}
\newcommand{\kibitz}[2]{\ifnum\Comments=1{\textcolor{#1}{\textsf{\footnotesize #2}}}\fi}
\newcommand{\version}{arxiv}
\newenvironment{packed_enum}{
  \begin{enumerate}
    \setlength{\itemsep}{1pt}
    \setlength{\parskip}{-1pt}
    \setlength{\parsep}{0pt}
}{\end{enumerate}}
\newcounter{qcounter}
 {\end{list}}
\newcommand{\oracle}{\ensuremath{\textsc{Oracle}}}
\newcommand{\topk}{\ensuremath{\textsc{Top-k}}}
\newcommand{\biclique}{\ensuremath{\textsc{Biclique}}}
\newcommand{\matching}{\ensuremath{\textsc{Matching}}}
\newcommand{\matroid}{\ensuremath{\textsc{Matroid}}}
\newcommand{\Low}{\ensuremath{\textrm{Low}}}
\newcommand{\Dis}{\ensuremath{\textsc{Disagree}}}
\newcommand{\true}{\ensuremath{\textsc{True}}}
\newcommand{\false}{\ensuremath{\textsc{False}}}
\newcommand{\tlog}{\ensuremath{\widetilde{\log}}}
\newcommand{\conv}{\textrm{conv}}
\begin{document} 

\title{Disagreement-Based Combinatorial Pure Exploration: Sample Complexity Bounds and an Efficient Algorithm}

\author[1]{
Tongyi Cao%
}
\author[2]{
Akshay Krishnamurthy%
\thanks{\{tcao,akshay\}@cs.umass.edu}}

\affil[1]{University of Massachusetts, Amherst, MA}
\affil[2]{Microsoft Research, New York, NY}
\maketitle

\begin{abstract}
  We design new algorithms for the combinatorial pure exploration
  problem in the multi-arm bandit framework. In this problem, we are
  given $K$ distributions and a collection of subsets $\Vcal \subset
  2^{[K]}$ of these distributions, and we would like to find the
  subset $v \in \Vcal$ that has largest mean, while collecting, in a
  sequential fashion, as few samples from the distributions as
  possible. In both the fixed budget and fixed confidence settings,
  our algorithms achieve new sample-complexity bounds that provide
  polynomial improvements on previous results in some settings. Via an
  information-theoretic lower bound, we show that no approach based on
  uniform sampling can improve on ours in any regime, yielding the
  first interactive algorithms for this problem with this basic
  property.  Computationally, we show how to efficiently implement our
  fixed confidence algorithm whenever $\Vcal$ supports efficient
  linear optimization.  Our results involve precise
  concentration-of-measure arguments and a new algorithm for linear
  programming with exponentially many constraints.
\end{abstract}

\section{Introduction}
\label{sec:intro}


Driven by applications in engineering and the sciences, much
contemporary research in mathematical statistics focuses on recovering
structural information from noisy data.  Combinatorial structures that
have seen intense theoretical investigation include
clusterings~\citep{mossel2014belief,abbe2016exact,balakrishnan2011noise},
submatrices~\citep{butucea2013detection,kolar2011minimax,chen2014statistical},
and graph theoretic structures like matchings, spanning trees, and
paths~\citep{ariascastro2008searching,addarioberry2010combinatorial}.
In this paper, we design interactive learning algorithms for these
structure discovery problems.


Our mathematical formulation is through the \emph{combinatorial pure
  exploration for multi-armed bandits}
framework~\citep{chen2014combinatorial}, a recent generalization of
the best-arm identification problem~\citep{mannor2004sample,audibert2010best}. 
In this setting, we are given a combinatorial decision set
$\Vcal \subset 2^{[K]}$ and access to $K$ arms, where each arm $a \in [K]$
is associated with a distribution with unknown mean $\mu_a$. We can, in
sequential fashion, query an arm and obtain an iid sample from the
corresponding distribution, and the goal is to identify the subset
$v \in \Vcal$ with maximum mean $\mu(v) = \sum_{a \in v} \mu_a$ while
minimizing the number of samples collected.

This model has been studied in recent work both in the general form
and with specific decision sets $\Vcal$.  For specific structures, a
line of work established near-optimal algorithms for any $\Vcal$ that
corresponds to the bases of a
matroid~\citep{kalyanakrishnan2012pac,kaufmann2013information,chen2016pure}
and slight generalizations~\citep{chen2014combinatorial}.
After~\citet{chen2014combinatorial} introduced the general problem,
\citet{gabillon2016improved} and \citet{chen2017nearly} made
interesting progress with improved guarantees reflecting precise
dependence on the underlying mean vector $\mu$.  However, these
results fail to capture intricate combinatorial structure of the
decision set, and, as we show, they can be polynomially worse than a
simple non-interactive algorithm based on maximum likelihood
estimation. With this in mind, our goal is to capture this
combinatorial structure to design an algorithm that is never worse
than the non-interactive baseline, but that can be much better.

Since we are doing combinatorial optimization, we typically consider
decision sets $\Vcal$ that are exponentially large but have small
description length, so that direct enumeration of the elements in
$\Vcal$ is not computationally tractable. Instead, we assume that
$\Vcal$ supports efficient linear optimization, and our main algorithm
only accesses $\Vcal$ through a linear optimization oracle. To shed
further light on purely statistical issues, we also present some
results for computationally inefficient algorithms.

\paragraph{Our Contributions.}
We make the following contributions:
\begin{packed_enum} 
\item First, we derive the minimax optimal sample complexity in the
  non-interactive setting, where arms are queried uniformly. This
  precisely characterizes how the structure of $\Vcal$ influences the
  sample complexity and also provides a baseline for evaluating
  interactive algorithms.
\item In the fixed confidence setting, we design two algorithms that
  are never worse than the non-interactive minimax rate, but that can
  adapt to heterogeneity in the problem to be substantially better. On
  the computational side, we show how to implement the first algorithm
  in polynomial time with access to a linear optimization oracle. The
  second algorithm is computationally inefficient, but has a strictly
  better sample complexity.
\item In the fixed budget setting, we design an algorithm with similar
  statistical improvements, improving on the MLE when there is
  heterogeneity in the problem.
\item We perform a careful comparison to prior work, with several
  concrete examples. We show that prior
  results~\citep{chen2014combinatorial,gabillon2016improved,chen2017nearly}
  can be polynomially worse than the non-interactive minimax rate,
  which contrasts with our guarantees. We also describe other settings where
  our results outperform prior work, and vice versa.
\end{packed_enum}

\paragraph{Our Techniques.}

The core of our statistical analysis is a new deviation bound for
combinatorial pure exploration that we call a \emph{normalized regret
  inequality}. We prove that to recover the optimal subset $v^\star$,
it suffices to control, for each $v \in \Vcal$, the sampling error in
the mean difference between $v$ and $v^\star$ at a level proportional
to the symmetric set difference between the two.\footnote{The name
  arises because the inequality involves comparison with the optimum
  and is normalized by the set difference.} In the non-interactive
setting, the normalized regret inequality always yields the optimal
sample complexity (as we prove in~\pref{thm:noninteractive_upper}),
and is often sharper than more standard uniform convergence arguments
(e.g., over the arms or the decision set) that have been used in
prior work. Our new guarantees stem from using this new inequality in
interactive procedures.

The fixed confidence setting poses a significant challenge, since
confidence bounds typically appear algorithmically, but the normalized
regret inequality is centered around the optimum, which is of course
unknown! We address this difficulty with an elimination-style
algorithm that eliminates a hypothesis $v \in \Vcal$ when any other
candidate is significantly better and that queries only where the
survivors disagree. Using only the normalized regret inequality, we
can prove that $v^\star$ is never eliminated, but also that $v^\star$
will eventually eliminate every other hypothesis.  This algorithm
resembles approaches for disagreement-based active
learning~\citep{hanneke2014theory}, but uses a much stronger
elimination criteria that is crucial for obtaining our sample
complexity guarantees.

Computationally, deciding if the surviving candidates disagree on an
arm poses further challenges, since the description of the surviving
set involves exponentially many constraints, one for each candidate
$u \in \Vcal$. This problem can be written as a linear program, which
we can solve using the Plotkin-Shmoys-Tardos reduction to online
learning~\citep{plotkin1995fast}. However, since there are
exponentially many constraints in the LP, the standard approach of
using multiplicative weight updates fails, but, exploiting further
structure in the problem, we can run Follow-the-Perturbed-Leader~\citep{kalai2005efficient},
since the online learner's problem is actually linear in the
candidates $u$ that parameterize the constraints. Thus with a linear
optimization oracle, we obtain an efficient algorithm
for the fixed confidence
setting.

\section{Preliminaries}
\label{sec:prelims}
In the combinatorial pure exploration problem, we are given a finite
set of arms $A \triangleq \{1,\ldots, K\}$, where arm $a$ is
associated with a sub-Gaussian distribution $\nu_a$ with unknown mean
$\mu_a \in [-1,1]$ and variance parameter $1$.\footnote{Recall a
  sub-Gaussian random variable $X$ with mean $\mu$ and variance parameter $\sigma^2$
  satisfies $\EE[\exp(s(X-\mu))] \leq \exp(\sigma^2 s^2/2)$. The results
  easily generalize to arbitrary known variance parameter.}  Further, we
are given a decision set $\Vcal \subseteq 2^A$.  For $u, v \in \Vcal$,
we use $d(u,v) \triangleq |u \ominus v|$ where $\ominus$ denotes the
symmetric set difference. The goal is to identify a set $v \in \Vcal$
that has the largest collective mean $\sum_{a \in v} \mu_a$.
Throughout the paper, we use the vectorized notation $\mu \triangleq
(\mu_1,\ldots,\mu_K)$ and $\Vcal \subset \{0,1\}^K$. 
With this notation, we seek to compute
\begin{align*}
  v^\star \triangleq \argmax_{v \in \Vcal}\langle v, \mu\rangle.
\end{align*}
We are interested in learning algorithms that acquire information
about the unknown $\mu$ in an interactive, iterative fashion. At the
$t^{\textrm{th}}$ iteration, the learning algorithm selects an arm
$a_t$ and receives a corresponding observation $y_t \sim
\nu_{a_t}$. The algorithm's choice $a_t$ may depend on all previous
decisions and observations $\{(a_\tau,y_\tau)\}_{\tau=1}^{t-1}$ and
possibly additional randomness.

\ifthenelse{\equal{\version}{workshop}}{We study the
  \emph{fixed confidence setting}, where a failure probability
  parameter $\delta \in (0,1)$ is provided as input to the algorithm,
  which produces an estimate $\hat{v}$, but must further enjoy the
  guarantee that $\PP[\hat{v} \ne v^\star] \le \delta$. In this
  setting, we seek to minimize the number of queries issued by the
  algorithm.}{ We consider two related performance goals. In the
  \emph{fixed budget setting}, the learning algorithm is given a
  budget of $T$ queries, after which it must produce an estimate
  $\hat{v}$ of the true optimum $v^\star$, and we seek to minimize the
  probability of error $\PP[\hat{v} \ne v^\star]$.  In
  the \emph{fixed confidence setting}, a failure probability parameter
  $\delta$ is provided as input to the algorithm, which still produces
  an estimate $\hat{v}$, but must further enjoy the guarantee that
  $\PP[\hat{v} \ne v^\star] \le \delta$. In this setting, we seek to
  minimize the number of queries issued by the algorithm.}

Since we are performing optimization over $\Vcal$, for computational
efficiency, we equip our algorithms with a linear optimization oracle
for $\Vcal$. Formally, we assume access to a function
\begin{align}
\label{eq:oracle}
\oracle(c) \triangleq \argmax_{v \in \Vcal} \langle v, c \rangle,
\end{align}
that solves the offline combinatorial optimization problem. This
oracle is available in many combinatorial problems, including
max-weight matchings, spanning trees, and shortest/longest paths in
DAGs,\footnote{Other problems may require slight reformulations of the
  setup for computational efficiency. For example, shortest paths
  in undirected graphs requires $\mu_a \in [0,1]$ and considering a
  minimization version.} and is a basic requirement here since
otherwise even if $\mu$ were known we would not be able to find
$v^\star$. Technically, we allow the oracle to take one additional
constraint of the form $v_a = b$ for $a\in A, b\in \{0, 1\}$, which
preserves computational efficiency in most cases.

To fix ideas, we describe two concrete motivating examples 
\ifthenelse{\equal{\version}{workshop}}{}{(see also~\pref{sec:examples})}.
\begin{example}[\matching]
\label{ex:matching}
Consider a complete bipartite graph with $\sqrt{K}$ vertices in each
partition cell so that there are $K$ edges, which we identify with $A$. Let
$\Vcal$ denote the perfect bipartite matchings and let $\mu$ assign a
weight to each edge. Here, the combinatorial pure exploration task amounts
to finding the maximum-weight bipartite matching in a graph with edge
weights that are initially unknown. Note that the linear optimization
oracle~\pref{eq:oracle} is available here. 
\end{example}

\begin{example}[\biclique]
\label{ex:biclique}
In the same graph-theoretic setting, let $\Vcal$ denote the set of
bicliques with $\sqrt{s}$ vertices from each partition. Equivalently
in a $\sqrt{K}\times\sqrt{K}$ matrix, $\Vcal$ corresponds to all
submatrices of $\sqrt{s}$ rows and $\sqrt{s}$ columns. This problem is
variously referred to as biclique, biclustering, or submatrix
localization, and has applications in
genomics~\citep{wang2007biclustering}. Unfortunately,~\pref{eq:oracle}
is known to be NP-hard for this structure.
\end{example}

\paragraph{A new complexity measure.}
We define two complexity measures that govern the
performance of our algorithms. We start with the notion of a gap
between the decision sets:
\begin{align*}
\Delta_v(\mu) \triangleq \frac{\langle 
  v^\star - v,\mu\rangle}{d(v^\star,v)}.
\end{align*}
$\Delta_v(\mu)$ captures the difficulty of determining if $v$ is
better than $v^\star$, and the normalization $d(v^\star,v)$ accounts
for the fact that the numerator is a sum of precisely $d(v,v^\star)$
terms. The gap for arm $a$ is $\Delta_a(\mu) \triangleq \min_{v: a \in
  v^\star \ominus v} \Delta_v(\mu)$, which captures the difficulty of
determining if $a$ is in the optimal set. 

We also introduce complexity measures that are independent of
$\mu$. For $v \in \Vcal$ and $k \in \NN$, let $\Bcal(k,v) \triangleq
\{ u \in \Vcal \mid d(v,u) = k\}$ be the sphere of radius $k$ centered
at $v$. Then define
\begin{align*}
\Phi \triangleq \Phi(\Vcal) \triangleq \max_{k \in \NN, v \in \Vcal} \frac{\log(|\Bcal(k,v)|)}{k}.
\end{align*}
For some intuition, $\Phi$ measures the growth rate of $\Vcal$ as we expand away from some candidate $v$. 
Finally, let $\Psi \triangleq \Psi(\Vcal) \triangleq \min_{u,v \in \Vcal} d(u,v)$ denote
the smallest distance. In all of these definitions, we omit the
dependence on $\mu$ and $\Vcal$ when it is clear from context.

\paragraph{A deviation bound and the non-interactive setting.}
As a reference point and to foreshadow our results, we first 
study the non-interactive setting.
With budget $T$, a non-interactive algorithm queries each arm $T/K$
times and then outputs an estimate $\hat{v}$ of $v^\star$.\footnote{For
  simplicity, we do not implement a stopping rule, which na\"{i}vely
  incurs a $\max_{v \ne v^\star} \log(1/\Delta_v)$ dependence.}  In
this case, we have the following \emph{normalized
  regret inequality}, which will play a central role in our analysis. 
\begin{lemma}[Normalized Regret Inequality]
\label{lem:normalized_ineq}
Query each arm $T/K$ times and let $\hat{\mu} \in \RR^K$ be the vector
of sample averages. Then $\forall \delta \in (0,1)$,  
\begin{align}
\PP\rbr{\exists v \in \Vcal:\ \frac{\abr{\inner{v^\star - v}{\hat{\mu} - \mu}}}{d(v^\star,v)} \geq \sqrt{\frac{2K}{T} \cdot \rbr{\Phi + \frac{\log(2K/\delta)}{\Psi}}}} \leq \delta. \label{eq:normalized_ineq}
\end{align}
\end{lemma}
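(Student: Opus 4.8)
The plan is a two-step argument: a sub-Gaussian tail bound for each fixed $v$, followed by a union bound over $\Vcal$ that I would organize by distance to $v^\star$, so that $\Phi$ controls the number of competitors at each radius and $\Psi$ supplies the smallest radius. We may assume $|\Vcal|\ge 2$ and $K\ge 2$, the remaining cases being immediate. First I would observe that, since arm $a$ is queried $T/K$ times and its samples are sub-Gaussian with variance parameter $1$, the empirical mean $\hat\mu_a$ is sub-Gaussian about $\mu_a$ with variance parameter $K/T$, independently across $a$. Fixing $v\ne v^\star$ and writing $d = d(v^\star,v)$, the vector $v^\star - v$ is supported on exactly the $d$ coordinates of $v^\star\ominus v$, with entries $\pm 1$ there, so $\inner{v^\star - v}{\hat\mu - \mu}$ is sub-Gaussian with variance parameter $dK/T$. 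The usual sub-Gaussian tail bound, applied at threshold $\epsilon d$, then gives
\[
\PP\!\left(\frac{\abr{\inner{v^\star - v}{\hat\mu - \mu}}}{d(v^\star,v)}\ge\epsilon\right)\le 2\exp\!\left(-\frac{\epsilon^2 T\, d(v^\star,v)}{2K}\right),
\]
and the summand for $v=v^\star$ is vacuous.

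Next I would union bound over $v\in\Vcal\setminus\{v^\star\}$, grouping the summands by $k = d(v^\star,v)$. By the definition of the sphere $\Bcal(k,v^\star)$ and of $\Phi$, at most $\abr{\Bcal(k,v^\star)}\le e^{\Phi k}$ competitors lie at distance $k$, and every such $k$ satisfies $k\ge\Psi$. Setting $\epsilon^2 = \frac{2K}{T}\big(\Phi + \frac{L}{\Psi}\big)$ with $L\triangleq\log(2K/\delta)$ makes $\frac{\epsilon^2 T}{2K} = \Phi + \frac{L}{\Psi}$, so the $\Phi$ in the exponent cancels the $\Phi$ coming from the covering count, and the union bound collapses to a geometric series:
\[
\sum_{k\ge\Psi}\abr{\Bcal(k,v^\star)}\cdot 2\exp\!\left(-\frac{\epsilon^2 T k}{2K}\right)\le 2\sum_{k\ge\Psi}e^{\Phi k-(\Phi+L/\Psi)k} = 2\sum_{k\ge\Psi}e^{-Lk/\Psi} = \frac{2e^{-L}}{1-e^{-L/\Psi}} = \frac{\delta/K}{1-e^{-L/\Psi}}.
\]

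The main obstacle is the entirely elementary final estimate $1-e^{-L/\Psi}\ge 1/K$, which is exactly what forces the logarithm to read $\log(2K/\delta)$ rather than $\log(2/\delta)$: the extra factor of $2$ inside the log is load-bearing. I would prove it as follows: since $\delta<1$ we have $L\ge\log(2K)$, hence $e^{-L/\Psi}\le(2K)^{-1/\Psi}$; the right-hand side is increasing in $\Psi$, and $\Psi\le K$ because symmetric differences of subsets of $[K]$ have size at most $K$, so $1-e^{-L/\Psi}\ge 1-(2K)^{-1/K}$, and a one-line computation (equivalently, $\log(2K)\ge K\log\frac{K}{K-1}$, which is an equality at $K=2$ and strictly slack for larger $K$) gives $1-(2K)^{-1/K}\ge 1/K$. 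Substituting back bounds the failure probability by $\delta$, which completes the argument. Everything else — the sub-Gaussian moment bookkeeping, the variance scaling under averaging, and the geometric summation — is routine.
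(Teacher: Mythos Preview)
Your proof is correct and follows essentially the same approach as the paper: a sub-Gaussian tail bound for each fixed $v$, then a union bound over $\Vcal$ stratified by distance $k=d(v^\star,v)$, using $\Phi$ to control $|\Bcal(k,v^\star)|$ and $\Psi$ as the smallest radius. The only difference is in the final summation over $k$: you evaluate the geometric series exactly and then invoke the elementary inequality $1-e^{-L/\Psi}\ge 1/K$, whereas the paper simply bounds the sum over $k\in\{\Psi,\ldots,K\}$ by $K$ times its largest term, obtaining $2K\exp(-\log(2K/\delta))=\delta$ without any further estimate.
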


This inequality and a simple argument sharply characterize the
performance of the MLE, $\hat{v} = \argmax_{v \in \Vcal}
\inner{v}{\hat{\mu}}$, which we prove is nearly minimax optimal for
the non-interactive setting.  


\begin{theorem}[Non-interactive upper and lower bound]
  \label{thm:noninteractive_upper}
  For any $\mu \in [-1,1]^K$ and $\delta \in (0,1)$ the non-interactive MLE
  guarantees $\PP_{\mu}[\hat{v}\ne v^\star] \le \delta$ with
  \begin{align*}
    T = \order\left(\frac{K}{\min_{v\ne v^\star} \Delta_v^2}\left(\Phi + \frac{\log(K/\delta)}{\Psi}\right)\right).
  \end{align*}
  Further, with $\Scal(v^\star,\Delta) \triangleq \{\mu : v^\star = \argmax_{v \in \Vcal} \langle v, \mu\rangle, \forall v \ne v^\star, \Delta_v(\mu) \ge \Delta\}$, any non-interactive algorithm must have $\sup_{v^\star \in \Vcal} \sup_{\mu \in \Scal(v^\star,\Delta)}
  \PP_{\mu}[\hat{v} \ne v^\star] \ge 1/2$ as
  long as
  $T \le \frac{K}{ \Delta^2}\left(\Phi - \log \log 3\right)$.
\end{theorem}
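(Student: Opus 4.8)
The first (upper bound) claim is a quick consequence of~\pref{lem:normalized_ineq}. Work on the event $E$ of that lemma (with the stated failure probability $\delta$), on which $\frac{\abr{\inner{v^\star - v}{\hat{\mu}-\mu}}}{d(v^\star,v)} < \sqrt{\frac{2K}{T}\rbr{\Phi + \frac{\log(2K/\delta)}{\Psi}}}$ for every $v\in\Vcal$. If the MLE $\hat{v}$ differed from $v^\star$, its optimality would give $\inner{v^\star - \hat{v}}{\hat{\mu}}\le 0$, while the definition of the gap gives $\inner{v^\star - \hat{v}}{\mu} \ge (\min_{v\ne v^\star}\Delta_v)\, d(v^\star,\hat{v})$; subtracting these yields $\frac{\abr{\inner{v^\star-\hat{v}}{\hat{\mu}-\mu}}}{d(v^\star,\hat{v})}\ge \min_{v\ne v^\star}\Delta_v$, which contradicts the display above as soon as $T \ge \frac{2K}{\min_{v\ne v^\star}\Delta_v^2}\rbr{\Phi+\frac{\log(2K/\delta)}{\Psi}}$. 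Hence on $E$ we have $\hat{v}=v^\star$, and $\PP[E^c]\le\delta$ finishes this direction; the constant is absorbed into the $\order(\cdot)$.

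For the lower bound I would run a Fano-type multiple-hypothesis argument anchored at the maximizer of $\Phi$. Fix $v^\star\in\Vcal$ and $k\in\NN$ attaining $\Phi = \log\abr{\Bcal(k,v^\star)}/k$, and enumerate $\Bcal(k,v^\star) = \{u_1,\dots,u_N\}$, so that $\log N = \Phi k$ exactly. We may assume $\Delta\le 1$ (otherwise $\Scal(v^\star,\Delta)=\emptyset$, since $\Delta_v(\mu)\le 1$ always), and for each $u\in\{v^\star,u_1,\dots,u_N\}$ define the sign-flip instance $\mu^{(u)}_a\triangleq\Delta$ if $a\in u$ and $\mu^{(u)}_a\triangleq-\Delta$ otherwise. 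A one-line computation gives $\inner{u-v}{\mu^{(u)}} = \Delta\, d(u,v)$ for all $v$, so under $\mu^{(u)}$ the set $u$ is the unique maximizer and $\Delta_v(\mu^{(u)})=\Delta$ for every $v\ne u$; thus $\mu^{(u)}\in\Scal(u,\Delta)$, and each $\mu^{(u_i)}$ (with $u_i\in\Vcal$) is a legitimate instance for the supremum in the theorem, the relevant error being $\PP_{\mu^{(u_i)}}[\hat v\ne u_i]$. Taking unit-variance Gaussian arm distributions, a non-interactive algorithm with budget $T$ observes the sufficient statistic $\hat{\mu}\sim\mathcal{N}\rbr{\mu^{(u)},\tfrac{K}{T}I_K}$, and planting the hidden optimum uniformly in $\{u_1,\dots,u_N\}$ reduces the problem to selecting among $N$ Gaussians.

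The information cost is small: $\mu^{(u_i)}$ and the reference $\mu^{(v^\star)}$ differ only on the $k$ coordinates of $u_i\ominus v^\star$, each by $2\Delta$, so the KL divergence between $\mathcal{N}\rbr{\mu^{(u_i)},\tfrac{K}{T}I_K}$ and $\mathcal{N}\rbr{\mu^{(v^\star)},\tfrac{K}{T}I_K}$ equals $\tfrac{2k\Delta^2 T}{K}$, and hence the mutual information between the hidden index and $\hat{\mu}$ is at most $\tfrac{2k\Delta^2 T}{K}$ (using $\mu^{(v^\star)}$ as the common reference measure). Fano's inequality then pushes the average --- hence worst-case --- error above $1/2$ whenever $\tfrac{2k\Delta^2 T}{K}$ is a suitable fraction of $\log N = \Phi k$, i.e. whenever $T\le \tfrac{K}{\Delta^2}(\Phi - c)$ for an absolute constant $c$. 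The construction and the KL bookkeeping are routine; the real work --- and the main obstacle --- is sharpening this final step to recover the exact leading constant claimed (coefficient $1$ on $\Phi$, together with the additive $\log\log 3$), since a vanilla application of Fano with this construction loses a constant factor in front of $\Phi$. Closing that gap calls for either a more economical perturbation / a better choice of reference measure, or a tighter non-asymptotic selection lower bound specialized to this Gaussian ensemble. A minor additional point is verifying that the maximum defining $\Phi$ is genuinely attained, so that $\log N = \Phi k$ holds with equality rather than up to lower-order slack.
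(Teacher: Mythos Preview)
Your upper-bound argument matches the paper's exactly: both reduce $\PP[\hat v\ne v^\star]$ to the event that the normalized deviation exceeds $\min_{v\ne v^\star}\Delta_v$ and then invoke~\pref{lem:normalized_ineq}.

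For the lower bound, your uniform-Fano construction over the optimal sphere $\Bcal(k,v^\star)$ is sound and delivers the correct rate, but, as you correctly diagnose, it loses a multiplicative constant in front of $\Phi$ (standard Fano gives roughly $T\le \tfrac{K}{4\Delta^2}\Phi$). The paper closes exactly this gap by a different device: rather than a uniform prior on a single sphere, it places a \emph{Gibbs} prior $\pi(v)\propto\exp\bigl(-\tfrac{T\Delta^2}{K}\,d(\tilde v,v)\bigr)$ over \emph{all} of $\Vcal$, centered at the $\Phi$-maximizer $\tilde v$, and applies the non-uniform Fano inequality
\[
\inf_{\hat v}\sup_v \PP_v[\hat v\ne v]\ \ge\ 1-\frac{\EE_{v\sim\pi}KL(P_v\|P_\pi)+\log 2}{H(\pi)}.
\]
The point of the Gibbs weighting is that the entropy decomposes as $H(\pi)=\log Z + c\cdot\EE_{v\sim\pi}KL(P_v\|P_{\tilde v})$ with $c\ge 1$, while the numerator is bounded by $\EE_{v\sim\pi}KL(P_v\|P_{\tilde v})+\log 2$ (using $KL(P_\pi\|P_{\tilde v})\ge 0$). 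The KL terms then \emph{cancel}, and the error exceeds $1/2$ as soon as the partition function $Z=\sum_{v\in\Vcal}\exp\bigl(-\tfrac{T\Delta^2}{K}d(\tilde v,v)\bigr)$ exceeds a universal constant. Lower-bounding this sum by its largest term and using $\log|\Bcal(k,\tilde v)|=\Phi k$ at the optimizing $k$ is what produces the coefficient~$1$ on $\Phi$ together with the additive $\log\log 3$. So the ``more economical reference measure / tighter selection bound'' you anticipated is precisely this Gibbs-prior cancellation trick (adapted from~\citet{krishnamurthy2016minimax}); your sign-flip perturbations and your choice of $P_{v^\star}$ as reference are already the right ingredients. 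On your side point: the maximum defining $\Phi$ ranges over finitely many pairs $(k,v)$, so attainment is automatic.
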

See~\pref{app:noninteractive} for the proof
of~\pref{lem:normalized_ineq} and~\pref{thm:noninteractive_upper}. The
proofs are not difficult and involve adapting the argument
of~\citet{krishnamurthy2016minimax} to our setting. We also extend his
result by introducing the combinatorial parameters $\Phi,\Psi$, which
are analytically tractable in many cases.  Indeed, since $\Phi\leq
\log(K)$, a more interpretable, but strictly weaker, upper bound is
$\order\rbr{\frac{K\log(K/\delta)}{\min_v \Delta_v^2}}$.


To compare the upper and lower bounds,
note that $\Phi \geq 1/\Psi$ always but in most examples, including
\matching\xspace and \biclique, we actually have $\Phi \geq
\log(K)/\Psi$. As such, in the moderate confidence regime where
$\delta= \textrm{poly}(1/K)$, the upper and lower bounds disagree by
at most $\log(K)$ factor, but typically they agree up to
constants. Hence,~\pref{thm:noninteractive_upper} identifies the
minimax non-interactive sample complexity and we may conclude that the
MLE is near-optimal here.

\paragraph{Prior results.}
The departure point for our work is the observation that all prior
results for the interactive setting can be polynomially worse than the
bound for the MLE. We defer a detailed comparison
to~\pref{sec:examples}, but as a specific example, on
$\sqrt{s}$-\biclique,~\pref{thm:noninteractive_upper} can improve on
the bound of~\citet{chen2014combinatorial} by a factor of $s^{3/2}$,
and it can improve on the bounds of~\citet{gabillon2016improved}
and~\citet{chen2017nearly} by a factor of $\sqrt{s}$.



For intuition, the analyses of~\citet{chen2014combinatorial}
and~\citet{gabillon2016improved} involve a uniform convergence
argument over individual arms. As noted by~\citet{chen2017nearly},
this argument is suboptimal whenever $\Psi \gg 1$ as it does not take
advantage of large distances between hypotheses (note that $\Psi=
\sqrt{s}$ in \biclique).  The analysis of~\citet{chen2017nearly}
avoids this argument, but instead involves uniform convergence over
the decision set, which can be suboptimal when set differences to
$v^\star$ vary in size. (e.g., in \biclique, the minimum and maximum
distances are $O(\sqrt{s})$ and $\Omega(s)$ respectively).  In
comparison, our analysis \emph{always} gives the minimax optimal
non-interactive rate (up to a $\log(K/\delta)$ factor), reflecting the
advantage of the normalized regret inequality over these other proof
techniques. In the next section, we show how this inequality yields an
interactive algorithm that is never worse than the MLE but that can
also be substantially better.

\paragraph{Related work.}

\ifthenelse{\equal{\version}{workshop}}{}{
Combinatorial pure exploration generalizes the best arm identification
problem, which has been extensively studied
(c.f.,~\citep{even2006action,mannor2004sample,audibert2010best,karnin2013almost,russo2016simple,garivier2016optimal,carpentier2016tight,chen2016pure,simchowitz2017simulator}
for some classical and recent results). This problem is much simpler
both computationally and statistically than ours, and,
accordingly, the results are much more precise. One important difference is that in best
arm identification, verifying that the optimal solution is correct is
roughly as hard as finding the optimal solution, which motivates many
algorithms and lower bounds based on Le Cam's
method~\citep{kaufmann2014complexity,karnin2016verification,garivier2016optimal,chen2017nearly}. 
However, for combinatorial problems discovering the optimal solution
often dominates the sample complexity, and hence these techniques do
not immediately produce near-optimal results in the combinatorial
setting.
Nevertheless our algorithms are inspired by some ideas from this
literature, namely elimination and successive-reject
techniques~\citep{even2006action,audibert2010best}.

The subset selection problem, also called \topk, is a special case of
combinatorial exploration where $\Vcal$ corresponds to all ${K \choose
  s}$
subsets~\citep{kalyanakrishnan2012pac,bubeck2013multiple,kaufmann2013information}. This
case is minimally structured, and, in particular, there is little to
be gained from our approach since $\Phi = \Theta(\log(K))$. A
related effect occurs when the decision set corresponds to the basis
of a matroid~\citep{chen2016pure}.

Structure discovery has also been studied in related mathematical
disciplines including electrical engineering and statistics. Research
on \emph{adaptive sensing} from the signal processing community
studies a similar setup but with assumptions on the mean $\mu$, which
lead to more specialized algorithms that fail in our general
setup~\citep{tanczos2013adaptive}. Work from information theory and
statistics focuses on non-interactive versions of the problem and
typically considers specific combinatorial
structures~\citep{krishnamurthy2016minimax,balakrishnan2011noise,ariascastro2008searching}. In
particular, the \biclique\xspace problem is extensively studied in the
non-interactive setting and the minimax rate is
well-known~\citep{chen2014statistical,kolar2011minimax,butucea2013detection}.


Our fixed confidence algorithm is inspired by disagreement-based
active learning approaches, which eliminate inconsistent hypotheses
and query where the surviving ones
disagree~\citep{cohn1994improving,hsu2010algorithms,hanneke2014theory}.
Our algorithm is similar but uses a stronger elimination criteria,
leading to sharper results for exact identification.  Unfortunately,
exact identification is rather different from PAC-learning, and it
seems our approach yields no improvement for PAC-active learning.


Lastly, we use an optimization oracle as a computational
primitive. This abstraction has been used previously in combinatorial
pure exploration~\citep{chen2014combinatorial,chen2017nearly}, but
also in other information acquisition problems including active
learning~\citep{hsu2010algorithms,huang2015efficient} and contextual
bandits~\citep{agarwal2014taming,syrgkanis2016improved,rakhlin2016bistro}.}

\section{Results}

Pseudocode for our fixed confidence algorithm is given in~\pref{alg:main}. The algorithm proceeds in rounds, and at
each round it issues queries to a judiciously chosen subset of the
arms. These arms are chosen by implicitly maintaining a version space
of plausibly optimal hypotheses and checking for disagreement among
the version space. 

The key ingredient is the definition of the version space. For a
vector $\hat{\mu} \in \RR^K$ and a radius parameter $\Delta$, the
version space is defined as
\begin{align}
\label{eq:version_space}
\Vcal(\hat{\mu},\Delta)\triangleq \left\{v \in \textrm{conv}(\Vcal) \mid \forall u \in \Vcal, \langle \hat{\mu}, u-v\rangle \le \Delta \|u - v\|_1 \right\}.
\end{align}
Here $\textrm{conv}(\Vcal)$ is the convex hull of $\Vcal$ and
$\|u-v\|_1$ is the $\ell_1$ norm, which is just $d(u,v)$ for binary
$u,v$.  The version space is normalized in that the radius is
modulated by $\|u-v\|_1$, which is justified
by~\pref{eq:normalized_ineq}. This yields much sharper guarantees than the
more standard un-normalized definition $\{v \mid \max_{u \in \Vcal}
\langle \hat{\mu}, u - v\rangle \leq \Delta\}$ from the active
learning literature~\citep{cohn1994improving,hsu2010algorithms}.
At round $t$, the version space we use is $\Vcal_t \triangleq
\Vcal(\hat{\mu}_t,\Delta_t)$ where $\hat{\mu}_t$ is the empirical mean
vector and $\Delta_t$ is defined in the algorithm based on the right
hand side of~\pref{eq:normalized_ineq}.


This version space is used by the disagreement computation
(\pref{alg:disagreement}), which, with parameters $a \in A,
b \in \{0,1\}, \Delta, \hat{\mu}, \delta$ approximately solves the feasibility problem
\begin{align}
\label{eq:dis_feasibility}
? \exists v \in \Vcal(\hat{\mu},\Delta) \textrm{ s.t. } v(a) = b.
\end{align}
At round $t$, we use $\hat{\mu}_t,\Delta_t$, and the value for $b$ that
we use in~\pref{line:disagree} is $1 - \hat{v}_t(a)$, where
$\hat{v}_t$ is the empirically best hypothesis on $\hat{\mu}_t$. Since
$\hat{v}_t$, being the empirically best hypothesis, is always in
$\Vcal_t$, this computation amounts to checking if there exist
two surviving hypotheses that \emph{disagree} on arm $a$.
We use this disagreement-based criteria to drive the query strategy.

Before turning to computational considerations, a few other details
warrant some discussion. First, if at any round we detect that there
is no disagreement on some arm $a$, then we use a \emph{hallucinated}
observation $y_t(a) = 2\hat{v}_t(a)-1 \in \{\pm 1\}$. While this leads
to bias in our estimates, since all surviving hypotheses $v \in
\Vcal_t$ agree with $\hat{v}_t$ on arm $a$, this bias favors the
survivors. As in related work on disagreement-based active learning,
this helps enforces monotonicity of the version
space~\citep{dasgupta2007general}. Finally, we terminate once there are
no remaining arms with disagreement, at which point we output the
empirically best hypothesis.

\begin{figure*}
\noindent\ifthenelse{\equal{\version}{colt}}{\begin{minipage}{0.5\linewidth}}{\begin{minipage}{0.49\linewidth}}
\begin{algorithm}[H]
  \caption{Fixed Confidence Algorithm}
  \begin{algorithmic}[1]
    \State Input: Class $\Vcal$, failure probability $\delta \in (0,1)$
    \State Set $\Delta_t = \min\left\{1, \sqrt{\frac{8}{t}\left(\frac{\Phi\Psi + \log(\frac{K\pi^2t^2}{\delta})}{\Psi}\right)}\right\}$
    \State Sample each arm once $y_0(a) \sim \nu_a$
    \State Set $\hat{\mu}_1 = y_0$
   \For{$t=1,2\ldots,$}
    \State Compute $\hat{v}_t = \argmax_{v \in \Vcal} \langle v, \hat{\mu}_t\rangle$
    \For{$a \in [K]$}
    \State \textbf{if} \textsc{Dis}$(a,1-\hat{v}_t(a), \Delta_t, \hat\mu_t, \frac{\delta}{t^2\pi^2})$ \label{line:disagree}
    \State ~~~~Query $a$, set $y_t(a) \sim \nu_a$
    \State \textbf{else}
    \State ~~~~Set $y_t(a) = 2\hat{v}_t(a)-1$
    \State \textbf{endif}
    \EndFor
    \State Update $\hat{\mu}_{t+1} \gets \frac{1}{t+1}\sum_{i=0}^{t}y_i$
    \State If no queries issued this round, output $\hat{v}_t$
    \EndFor
    \vspace{0.1em}
    \end{algorithmic}
  \label{alg:main}
\end{algorithm}
\end{minipage}
\noindent\ifthenelse{\equal{\version}{colt}}{\begin{minipage}{0.5\linewidth}}{\begin{minipage}{0.49\linewidth}}
\begin{algorithm}[H]
\caption{Oracle-based Disagreement (\textsc{Dis})}
  \begin{algorithmic}[1]
    \vspace{0.065cm}
    \State Input: $a, b, \Delta, \hat{\mu}, \delta$
    \State Set $T = \frac{169K^3\log(4K/\delta)}{\Delta^2}$, $m = T\log(\frac{4KT}{\delta})$
    \State Set $\epsilon = \sqrt{\frac{1}{25KT\log(4K/\delta)}}$, $\ell_0=0$
    \For{$t=1, \dots,T$}
    \For{$i=1, \dots, m$}
    \State Sample $\sigma_{t, i} \sim \textrm{Unif}([0, 1/\epsilon]^K) $
    \State $u_{t, i}$ = \Call{Oracle}{$\Vcal, \sum_{t=0}^{t-1} \ell_t + \sigma_{t,i}$}
    \EndFor
    \State Let $s,x_t$ be the value and optimum of
    \begingroup\abovedisplayskip=0.2em \belowdisplayskip=0.4em
    \begin{align}
      \max {\textstyle\sum}_{i=1}^{m} \Delta \langle v, \one - 2 u_{t,i}\rangle + \langle v, \hat{\mu}\rangle\label{eq:empfeassmall}\\
      \textrm{s.t. } v \in \textrm{conv}(\Vcal), v(a) = b \notag
    \end{align}
    \endgroup
    \State $\textrm{val} = \sum_{i=1}^{m} \langle u_{t,i},\Delta\one - \hat{\mu}\rangle$
    \State if $s + \textrm{val} < 0$ \Return \false \label{line:infeas}
    \State Set $\ell_t = \Delta\one - 2\Delta x_t - \hat{\mu}$
    \EndFor
    \State \Return \true
    \vspace{0.05cm}
  \end{algorithmic}
  \label{alg:disagreement}
\end{algorithm}
\end{minipage}
\vspace{-0.4cm}
\end{figure*}

Note that to set $\Delta_t$ in the algorithm, we must compute
$\Phi$. This can be done analytically for many structures including,
paths in various graph models, bipartite matching, and the biclique
problem. Even when it cannot, $\Phi$ is independent of the unknown
means, so it can always be computed via enumeration, although this may
compromise the efficency of the algorithm. Finally, we can always use
the upper bound $\Phi \leq \log(K)$, which may increase the sample
complexity, but will not affect the correctness of the algorithm.

\paragraph{Efficient implementation of disagreement computation.}
Computationally, the bottleneck is the feasibility
problem~\pref{eq:dis_feasibility} for the disagreement
computation. All other computations in~\pref{alg:main} can trivially
be done in polynomial time with access to the optimization
oracle~\pref{eq:oracle}. Therefore, to derive an oracle-efficient
algorithm, we show how to solve~\pref{eq:dis_feasibility}, with
pseudocode in~\pref{alg:disagreement}.

It is not hard to see that~\pref{eq:dis_feasibility} is a linear
feasibility problem, but it has $|\Vcal|$ constraints, which could be
exponentially large. This precludes standard linear programming
approaches, and instead we use the Plotkin-Shmoys-Tardos reduction to
online learning~\citep{plotkin1995fast}.\footnote{Technically, we do
  have a separation oracle here, so we could use the Ellipsoid
  algorithm, but a standard application would certify feasibility or
  approximate infeasibility. Our reduction instead certifies
  infeasibility or approximate feasibility, which is more
  convenient.} The idea is to run an online learner to compute
distributions over the constraints and solve simpler feasibility
problems to generate the losses. In our case, the constraints are
parametrized by candidates $u \in \Vcal$ and we can express each
generated loss as a linear function of the constraint parameter $u$,
which enables us to use Follow-The-Perturbed-Leader (FTPL) as the
online learning algorithm~\citep{kalai2005efficient}. Importantly,
FTPL can be implemented using only the linear optimization oracle. As
a technical detail, we must use an empirical distribution based on
repeated oracle calls to approximate the true FTPL distribution, since
in our reduction the loss function is generated after and based on the
random decision of the learner.




First we provide the guarantee for the disagreement routine.

\begin{theorem}[Efficient Disagreement Computation]\label{thm:effdis}
\pref{alg:disagreement} with parameters
$a,b,\Delta,\hat{\mu}, \delta$ runs in polynomial time with
$\tilde{O}(K^6/\Delta^4)$ calls to \oracle. If it reports
\textsc{false} then Program~\pref{eq:dis_feasibility} is
infeasible. If it reports \textsc{true} then with probability at least
$1-\delta$, $\exists v\in \textrm{conv}(\Vcal), v(a)=b$ such that  $\forall u
\in \Vcal, \langle \hat{\mu} , u-v \rangle \le \Delta \|u - v\|_1
+\Delta$.
\end{theorem}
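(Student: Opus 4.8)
The plan is to read \pref{alg:disagreement} as an instance of the Plotkin--Shmoys--Tardos reduction of a linear program with exponentially many constraints to online learning, the wrinkle being that the per-round losses fed to the learner depend on the learner's own internal randomness. First I would rewrite the feasibility problem \pref{eq:dis_feasibility} as: find $v$ in the polytope $P \triangleq \conv(\Vcal)\cap\{v : v(a) = b\}$ satisfying the $|\Vcal|$ linear constraints $g_u(v) \triangleq \langle\hat\mu, u - v\rangle - \Delta\|u-v\|_1 \le 0$, one per $u \in \Vcal$. The structural point is that, since $u \in \{0,1\}^K$ and $v \in [0,1]^K$, we have $\|u-v\|_1 = \langle u,\one\rangle + \langle v,\one\rangle - 2\langle u,v\rangle$, so $g_u(v)$ is jointly affine in $(u,v)$; in particular it is linear in the constraint index $u$. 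Introducing a distribution $p$ over the constraint set $\Vcal$, standard duality gives that \pref{eq:dis_feasibility} is infeasible whenever some $p$ has $\min_{v\in P}\EE_{u\sim p}[g_u(v)] > 0$. This is exactly the infeasibility test in \pref{line:infeas}: the program \pref{eq:empfeassmall} computes, for the empirical distribution $\hat p_t$ of the draws $u_{t,1},\dots,u_{t,m}$, the minimizer $x_t$ of $v\mapsto\EE_{u\sim\hat p_t}[g_u(v)]$ over $P$ (after discarding the $v$-independent part of $g_u$ this is the maximization written in \pref{eq:empfeassmall}) together with its value, which equals $-(s+\mathrm{val})/m$; so $s+\mathrm{val}<0$ certifies $\min_{v\in P}\EE_{u\sim\hat p_t}[g_u(v)] > 0$, hence that no $v\in P$ satisfies even the $\hat p_t$-averaged constraint, and a fortiori \pref{eq:dis_feasibility} is infeasible. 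This direction is deterministic and needs no probabilistic argument.

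For the \textsc{true} direction I would run the constraint player as a no-regret online learner over $\Vcal$ and take $\bar x \triangleq \frac{1}{T}\sum_{t=1}^{T} x_t$ as the witness $v$ (it lies in $\conv(\Vcal)$ and satisfies $\bar x(a) = b$, and the theorem only asserts such a $v$ exists). Because $g_u(v)$ is linear in $u$, tracking the most-violated constraint is a linear online-optimization problem over $u\in\Vcal$ (its round-$t$ loss vector being the $\ell_t$ defined in \pref{alg:disagreement}), so one can use Follow-the-Perturbed-Leader, realized with a single \oracle\ call per perturbation draw. The argument then combines: (i) having completed all $T$ rounds without declaring infeasibility means $\EE_{u\sim\hat p_t}[g_u(x_t)] \le 0$ at every round; (ii) linearity of $g_u(\cdot)$, so $\frac{1}{T}\sum_t g_u(x_t) = g_u(\bar x)$ for every $u$; and (iii) the FTPL regret bound over the $0/1$ action set $\Vcal$ (which has $\ell_1$-diameter at most $K$, with bounded losses). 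Chaining these gives $\max_{u\in\Vcal} g_u(\bar x) \le \mathrm{Regret}/T$, and with $T = \Theta(K^3\log(K/\delta)/\Delta^2)$ and $\epsilon = \Theta((KT\log(K/\delta))^{-1/2})$ the per-round regret is at most $\Delta$; unpacking $g_u(\bar x)\le\Delta$ is precisely $\langle\hat\mu, u-\bar x\rangle \le \Delta\|u-\bar x\|_1 + \Delta$ for all $u$.

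The step that needs care — and the reason \pref{alg:disagreement} uses $m$ repeated oracle calls rather than one FTPL draw — is that the round-$t$ loss $\ell_t$ depends on $x_t$, hence on the realized samples $u_{t,1},\dots,u_{t,m}$: the loss sequence is neither oblivious nor generated by a standard adaptive adversary, but is a function of the learner's own coins. I would handle this in two steps. First, the FTPL stability (``follow-the-leader'') analysis only uses that the learner's distribution $p_t$ is a function of $\ell_1,\dots,\ell_{t-1}$, so the regret inequality $\sum_t \EE_{u\sim p_t}[g_u(x_t)] \ge \max_u\sum_t g_u(x_t) - R_{\mathrm{FTPL}}$ for the \emph{true} FTPL distribution $p_t$ survives such losses, with $R_{\mathrm{FTPL}}$ controlled by $T,\epsilon,K$ and the loss range. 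Second, conditionally on the history the draws $u_{t,1},\dots,u_{t,m}$ are i.i.d.\ from the well-defined $p_t$, and since $g_u(v)$ is linear in $u$ a Hoeffding bound reduces to coordinatewise concentration of $\frac{1}{m}\sum_i u_{t,i}$ around $\EE_{u\sim p_t}[u]$; a union bound over the $T$ rounds and $K$ coordinates then shows that with probability at least $1-\delta$ the empirical quantities $s$, $\mathrm{val}$, and $\EE_{u\sim\hat p_t}[g_u(x_t)]$ all agree with their true-distribution counterparts up to an additive error that is a small fraction of $\Delta$, provided $m = \widetilde{\Omega}(K^2/\Delta^2)$, which the choice $m = T\log(4KT/\delta)$ satisfies. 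Substituting into the chain of the previous paragraph yields $\max_u g_u(\bar x) \le \Delta$ on this event.

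The main obstacle is precisely this interaction between the learner's randomness and the adversarial losses, plus checking that the stated $(T,\epsilon,m)$ make $R_{\mathrm{FTPL}}$ together with the accumulated sampling slack total at most $\Delta T$. The rest — the explicit FTPL regret computation over the hypercube, the concentration and union-bound bookkeeping, and the oracle count (per round: $m$ calls for the perturbed-leader samples and one for \pref{eq:empfeassmall}, which is plain linear optimization over $\conv(\Vcal)$ with the single permitted equality constraint, for a total of $T(m+1) = \widetilde{O}(K^6/\Delta^4)$) — is routine.
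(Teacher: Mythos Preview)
Your proposal is correct and follows essentially the same route as the paper: the bilinearity identity $\|u-v\|_1 = \langle u+v,\one\rangle - 2\langle u,v\rangle$, the deterministic \textsc{false} direction via the averaged constraint, the \textsc{true} direction by taking $\bar x = \frac{1}{T}\sum_t x_t$ and combining the FTPL regret bound (applied to the \emph{expected} play $p_t$, viewing the sampling of $\tilde p_t$ as the adversary's randomness) with a coordinatewise Hoeffding-plus-union-bound concentration of $\tilde p_t$ around $p_t$ --- this last step is exactly the paper's \pref{lem:ftpl_conc}. The only point you glossed over that the paper makes explicit is that $\|\ell_t\|_1$ requires a bound on $\|\hat\mu\|_1$, which the paper obtains by a separate Chernoff argument (absorbing half of $\delta$); this is why their $A,R$ parameters pick up a $\sqrt{\log(K/\delta)}$ factor rather than being $O(K)$ outright, but it does not affect the structure of the argument or the final $\tilde O(K^6/\Delta^4)$ oracle count.
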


This result proves that~\pref{alg:disagreement} can approximate the
feasibility problem in~\pref{eq:dis_feasibility} in polynomial time
using the optimization oracle. The approximation is one-sided and,
since we do not query when the algorithm returns \false, only affects
the sample complexity of~\pref{alg:main}, but never the
correctness. The following theorem, which is the correctness and
sample complexity guarantee for~\pref{alg:main}, shows that this
approximation has negligible effect. For the theorem, recall the
definition of the arm gaps $\Delta_a \triangleq \min_{v:a \in v \ominus
  v^\star}\Delta_v$.


\begin{theorem}[Fixed confidence sample complexity bound]\label{thm:fixconsc}
For any combinatorial exploration instance with mean vector $\mu$, and
any $\delta \in (0,1)$,~\pref{alg:main} guarantees that $\PP[\hat{v}
  \ne v^\star] \le \delta$. Moreover, it runs in polynomial time with
access to the optimization oracle, and the total number of samples is
at most
\begin{align*}
\sum_{a \in K}\frac{144}{\Delta_a^2}\left(\Phi + \frac{2\log(144/(\Delta_a^2\Psi)) + 2\log(K\pi^2/\delta)}{\Psi}\right).
\end{align*}
\end{theorem}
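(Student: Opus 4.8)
The plan is to exhibit a favorable event $\mathcal{E}$ with $\PP[\mathcal{E}]\ge1-\delta$ on which three deterministic facts hold at every round $t$: \textbf{(i)} $v^\star\in\Vcal_t$; \textbf{(ii)} $\hat v_t(a)=v^\star(a)$ whenever $\Delta_t<\Delta_a$; and \textbf{(iii)} whenever $\Delta_t<\Delta_a/3$, the call \textsc{Dis}$(a,1-\hat v_t(a),\Delta_t,\hat\mu_t,\cdot)$ returns \false, so arm $a$ is not queried that round. Granting these, correctness is immediate: \textsc{Dis} reporting \false always certifies genuine infeasibility of~\pref{eq:dis_feasibility} (the approximation in~\pref{thm:effdis} is one-sided), so the output can be wrong only on $\mathcal{E}^c$, while on $\mathcal{E}$, once $\Delta_t<\tfrac13\min_a\Delta_a$ no arm is queried, the algorithm halts, and by~(ii) it outputs $\hat v_t=v^\star$. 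For the sample bound, (iii) and the monotonicity of $t\mapsto\Delta_t$ imply arm $a$ is queried only during the initial block of rounds $\{0,\dots,T_a\}$ with $T_a=\max\{t:\Delta_t\ge\Delta_a/3\}$; substituting the definition of $\Delta_t$ gives the implicit inequality $T_a\le\tfrac{72}{\Delta_a^2}\bigl(\Phi+\tfrac1\Psi\log(K\pi^2 T_a^2/\delta)\bigr)$, and unrolling it (bounding $\log T_a$ after one substitution) yields the per-arm bound in the theorem; summing $T_a+1$ over $a\in[K]$ finishes. The per-call failure probabilities $\delta/(t^2\pi^2)$ fed to \textsc{Dis} via~\pref{thm:effdis} affect only this count, never correctness, and their total contribution is controlled and folded into $\mathcal{E}$.

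The construction of $\mathcal{E}$ is the main obstacle, because~\pref{lem:normalized_ineq} is a non-adaptive uniform-sampling bound, whereas here the per-arm query counts are data dependent and, worse, arms with no disagreement receive \emph{hallucinated} values $2\hat v_t(a)-1$ instead of samples. I would couple the run with iid draws $z_{a,i}\sim\nu_a$ for all arms $a$ and rounds $i$, the algorithm using $z_{a,i}$ whenever it queries $a$ at round $i$. Since ``query $a$ at round $i$'' depends only on $\hat\mu_i$, the indicator $\mathbf 1[\text{query }a\text{ at }i]$ is predictable, so for each $a$ the process $\sum_{i<t}\mathbf 1[\text{query }a\text{ at }i](z_{a,i}-\mu_a)$ is a martingale, and a short moment-generating-function computation shows it is sub-Gaussian with variance parameter at most its length, which is deterministically at most $t$. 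Hence, for each $v$ with $d(v,v^\star)=k$, the quantity $\tfrac1t\sum_{a\in v\ominus v^\star}(v(a)-v^\star(a))\sum_{i<t}\mathbf 1[\text{query }a\text{ at }i](z_{a,i}-\mu_a)$ is sub-Gaussian with parameter at most $k/t$. Define $\mathcal{E}$ to be the event that this quantity is at most $\Delta_t\,d(v,v^\star)$ for all $t$ and all $v\in\Vcal$. A union bound over the spheres $\Bcal(k,v^\star)$ via $|\Bcal(k,v^\star)|\le e^{\Phi k}$ and over rounds $t$ via the $\log(K\pi^2 t^2/\delta)$ term built into $\Delta_t$ — whose leading constant $8$ is calibrated precisely for this — gives $\PP[\mathcal{E}]\ge1-\delta$.

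It remains to derive~(i)--(iii) on $\mathcal{E}$, which is comparatively routine. I would prove~(i) by induction on $t$: if $v^\star\in\Vcal_s$ for all $s<t$, then every hallucinated value used through round $t-1$ equals $2v^\star(a)-1$ (since \textsc{Dis} returning \false means no point of $\Vcal_s$ — in particular not $v^\star$ — takes the value $1-\hat v_s(a)$ on arm $a$); since $\mu_a\in[-1,1]$, for every coordinate $a$ and every $v$ the scalar $(v(a)-v^\star(a))\bigl((2v^\star(a)-1)-\mu_a\bigr)\le0$, so replacing samples by hallucinations on the arms of $v\ominus v^\star$ only \emph{decreases} $\langle\hat\mu_t-\mu,\,v-v^\star\rangle$; hence this is at most the martingale quantity above, which on $\mathcal{E}$ is $\le\Delta_t d(v,v^\star)$. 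Combined with $\langle\mu,u-v^\star\rangle\le0$ this yields $\langle\hat\mu_t,u-v^\star\rangle\le\Delta_t\|u-v^\star\|_1$ for all $u\in\Vcal$, i.e.\ $v^\star\in\Vcal_t$, closing the induction. For~(ii), combine $\langle\hat v_t-v^\star,\hat\mu_t\rangle\ge0$ (empirical optimality) with the inequality just derived and the identity $\langle\mu,\hat v_t-v^\star\rangle=-\Delta_{\hat v_t}d(\hat v_t,v^\star)$ to get $\Delta_t\ge\Delta_{\hat v_t}\ge\Delta_a$ whenever $\hat v_t\ne v^\star$ and $a\in\hat v_t\ominus v^\star$. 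For~(iii), suppose \textsc{Dis}$(a,1-v^\star(a),\Delta_t,\hat\mu_t,\cdot)$ reported \true; by~\pref{thm:effdis} there is $v=\sum_j\lambda_j w_j\in\conv(\Vcal)$ with $w_j\in\Vcal$, $v(a)=1-v^\star(a)$, and $\langle\hat\mu_t,u-v\rangle\le\Delta_t\|u-v\|_1+\Delta_t$ for all $u\in\Vcal$. Since $v(a)\in\{0,1\}$ is an extreme value, $w_j(a)=v(a)\ne v^\star(a)$ for every $j$ in the support, so $\Delta_{w_j}\ge\Delta_a$ and $d(w_j,v^\star)\ge\Psi$; plugging $u=v^\star$ into the slack constraint, expanding over $j$, and using~(i)'s round-$t$ bound for each $w_j$ yields $\Delta_t\ge\Delta_a\Psi/(2\Psi+1)\ge\Delta_a/3$, contradicting $\Delta_t<\Delta_a/3$. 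Finally, polynomial running time is inherited from~\pref{thm:effdis} for the $K$ disagreement tests per round (each $\tilde O(K^6/\Delta_t^4)$ oracle calls), is trivial for the remaining per-round computation, and the number of rounds is $O(\max_a T_a)$, polynomial in the instance parameters and $\log(1/\delta)$.
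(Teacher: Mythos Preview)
Your proposal is correct and follows essentially the same route as the paper: the martingale concentration you build from the indicator-weighted increments is exactly the paper's~\pref{lem:uniform_martingale} (there $\bar\mu_i-y_i$ vanishes on hallucinated rounds, so the two martingales coincide), your inductive claim~(i) is~\pref{lem:correct}, your claim~(iii) together with the intermediate step~(ii) is~\pref{lem:sampcom}, and your unrolling of the implicit bound on $T_a$ is the paper's application of~\pref{fact:transcendental}. The only cosmetic differences are that you isolate~(ii) as a separate claim and derive the threshold $\Delta_a/3$ via the sharper inequality $\Delta_t\ge\Delta_a\Psi/(2\Psi+1)$, whereas the paper folds~(ii) into the proof of~\pref{lem:sampcom} and obtains the same threshold via $2\sum_i\alpha_i d(v^\star,v_i)\Delta_t\ge\Delta_t\|v^\star-x\|_1+\Delta_t$.
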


The bound replaces the worst case gap, $\frac{K}{\min_v \Delta_v^2}$,
in~\pref{thm:noninteractive_upper} with a less pessimistic notion,
$\sum_a \Delta_a^{-2}$, that accounts for heterogeneity in the
problem.
Since $\sum_a \Delta_a^{-2} \leq \frac{K}{\min_v \Delta_v^2}$, the
bound is never worse than the minimax lower bound for non-interactive
algorithms (given in~\pref{thm:noninteractive_upper}) by more than a
logarithmic factor, but it can be much better if many arms have large
gaps. To our knowledge,~\pref{alg:main} is the first combinatorial
exploration algorithm that is never worse than non-interactive
approaches yet can exploit heterogeneity in the problem. 


\pref{thm:fixconsc} is not easily comparable with prior results for
combinatorial pure exploration, which use different complexity
measures than our gaps $\Delta_a$, $\Phi$, and $\Psi$. Our
observations from~\pref{thm:noninteractive_upper} apply here:
since~\pref{thm:fixconsc} precisely captures the combinatorial
structure of $\Vcal$, it can yield polynomial improvements over prior
work. On the other hand, our notion of gap $\Delta_a$ is different
from, and typically smaller than, prior definitions, so these results
can also dominate ours.
We defer a detailed comparison with calculations for several concrete examples to~\pref{sec:examples}.

We provide the full proof for~\pref{thm:effdis}
and~\pref{thm:fixconsc} in~\pref{app:proofs}. As a brief sketch,
we prove a martingale version of~\pref{lem:normalized_ineq}.
This inequality and the choice of
$\Delta_t$ verifies that $v^\star$ is never eliminated. We then show
that once $\Delta_t < \Delta_a$, all hypothesis $v \in \Vcal$ with $a
\in v^\star \ominus v$ satisfy $\inner{\hat{\mu}}{v^\star - v} >
\Delta_t \nbr{v^\star-v}_1$ and so they are eliminated from the
version space. \pref{thm:effdis} then guarantees that arm $a$ will
never be queried again, which yields the sample complexity bound.



\ifthenelse{\equal{\version}{workshop}}{
Briefly we mention three examples:
\begin{enumerate}
\item For classical best arm identification, all of these results are
  essentially the same, differing only in logarithmic factors.
\item For \topk\xspace and \matroid, all results are essentially the same in
  the homogeneous case where $\mu = (2v^\star-1)\Delta$ for some
  $v^\star \in \Vcal$. Outside of this special case, our bound can be
  worse since it uses a worse instance-dependent complexity measure. 
\item For problems like Matching and Biclique, as describe in~\pref{prop:comparison}, our bound can be polynomially
  better in the homogeneous case. In general, the results
  are incomparable since we have a worse instance-dependent measure
  but a better instance-independent one.
\end{enumerate}
}{} 

\ifthenelse{\equal{\version}{workshop}}{}{

\subsection{Deferred Results}
In this section we state two related results: a guarantee for a
disagreement-based algorithm in the fixed budget setting, and a more
refined sample complexity bound for a computationally inefficient
fixed confidence algorithm. Both algorithms and all proof details are
deferred to the appendices.

\subsubsection{A fixed budget algorithm}
Recall that in the fixed budget setting, the algorithm is given a
budget of $T$ queries and after issuing these queries, it must output
an estimate $\hat{v}$. The goal is to minimize $\PP[\hat{v} \ne
  v^\star]$. As is common in the literature, this setting requires a
modified definition of the instance complexity, which for our fixed
confidence result is $H \triangleq \sum_{a \in [K]}
\Delta_a^{-2}$. For the definition,
let $\Delta^{(j)}$ denote the $j^{\textrm{th}}$ largest $\Delta_a$
value, breaking ties arbitrarily.
The complexity measure for the fixed budget setting is
\begin{align*}
\tilde{H} \triangleq \max_j (K+1-j) (\Delta^{(j)})^{-2}.
\end{align*}
It is not hard to see that $\tilde{H} \le H \le \tlog(K) \tilde{H}$, where $\tlog(t) = \sum_{i=1}^t 1/i$ is the partial harmonic sum. With these new definitions, we can state our fixed budget
guarantee.
\begin{theorem}[Fixed Budget Guarantee]
\label{thm:fixed_budget}
Given budget $T\ge K$, there exists a fixed budget
algorithm (\pref{alg:fixbudget} in~\pref{app:fixed_budget}) that guarantees
\begin{align*}
\PP[\hat{v} \ne v] \le K^2 \exp \left\{\Psi \left(\Phi -  \frac{(T-K)}{9\tlog(K) \tilde{H}}\right)  \right\}.
\end{align*}
\end{theorem}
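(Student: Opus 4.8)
The plan is to mount a successive-rejects-style phased algorithm in the spirit of \citet{audibert2010best}, but using the normalized regret inequality (\pref{lem:normalized_ineq}) as the deviation tool and the version-space disagreement primitive of \pref{alg:main} to ``decide'' arms. After spending $K$ samples querying each arm once, split the remaining $T-K$ queries into $K-1$ phases. In phase $j$, maintain a set $A_j$ of arms not yet decided and query each $a\in A_j$ an additional $n_j-n_{j-1}$ times, where $n_0=1$ and $n_j\triangleq\lceil\frac{1}{\tlog(K)}\cdot\frac{T-K}{K+1-j}\rceil$; at the end of the phase, form $\hat\mu_j$ (using the hallucinated value $2\hat v_i(a)-1$ for any arm $a$ decided in an earlier phase $i$, exactly as in \pref{alg:main}), use the disagreement test at a radius $\Delta_j$ computed from $n_j,\Phi,\Psi$ to decide every arm on which $\Vcal(\hat\mu_j,\Delta_j)$ no longer disagrees, and, if necessary, additionally force-decide undecided arms with the largest empirical normalized margin $\max_{u:\,a\in u\ominus\hat v_j}\langle\hat v_j-u,\hat\mu_j\rangle/d(u,\hat v_j)$ until $|A_{j+1}|\le K-j$. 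Each decided arm takes value $\hat v_j(a)$; output $\hat v_{K-1}$. By Abel summation $\sum_j|A_j|(n_j-n_{j-1})\le\sum_j(K+1-j)(n_j-n_{j-1})\le T-K$, so the budget is respected deterministically, and there are always enough undecided arms to meet the $|A_{j+1}|\le K-j$ quota.

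The first analytic step is a fixed-schedule version of \pref{lem:normalized_ineq}: for any level $t$, $\PP[\exists u\in\Vcal:\ |\langle u-v^\star,\hat\mu_j-\mu\rangle|>t\,d(u,v^\star)]\lesssim\exp(\Psi(\Phi-n_j t^2/c))$, obtained by summing the per-sphere bound $2|\Bcal(k,v^\star)|\exp(-n_jkt^2/2)\le 2\exp(k(\Phi-n_jt^2/2))$ over $k\ge\Psi$ (here only the $n_j$-sample coordinates contribute; the hallucinated ones are handled separately and only help). The factor $\Psi$ in the exponent is exactly the telescoping of the geometric series from $k=\Psi$, using $\log|\Bcal(k,v^\star)|\le k\Phi$. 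Let $\mathcal{E}$ be the intersection, over the $\le K-1$ phases, of these events at level $t_j=\Delta^{(j)}/2$.

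On $\mathcal{E}$, I would argue by induction on $j$ that (i) every arm decided through phase $j$ is decided to $v^\star(a)$ --- a wrong decision of arm $a$ at phase $j$ would put $a\in\hat v_j\ominus v^\star$ with $\hat v_j$ empirically optimal, forcing $|\langle v^\star-\hat v_j,\hat\mu_j-\mu\rangle|\ge d(v^\star,\hat v_j)\Delta_a>t_j\,d(v^\star,\hat v_j)$, and the hallucinated coordinates from earlier, correct decisions only increase $\langle v^\star-\hat v_j,\hat\mu_j\rangle$, so this contradicts $\mathcal{E}$ as soon as $\Delta_a>t_j$; and (ii) the arm force-decided at phase $j$ has true gap $\Delta_a\gtrsim\Delta^{(j)}$ --- since empirical and true normalized margins agree up to $t_j$ on $\mathcal{E}$, this is the standard successive-rejects accounting (at most $j-1$ arms can have gap exceeding $\Delta^{(j)}$, and disagreement-test decisions only remove further arms without error by (i)). Chaining (i) over all phases yields $\hat v_{K-1}=v^\star$ on $\mathcal{E}$ (once $K-1$ coordinates are pinned to $v^\star$, distinctness forces the empirical optimum to be $v^\star$; the $\Psi=1$ boundary case is handled by one extra direct decision).

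Finally I assemble $\PP[\hat v_{K-1}\ne v^\star]\le\PP[\mathcal{E}^c]$. By (ii), the arm decided at phase $j$ satisfies $n_j\Delta_a^2\gtrsim n_j(\Delta^{(j)})^2=\frac{T-K}{\tlog(K)(K+1-j)(\Delta^{(j)})^{-2}}\ge\frac{T-K}{\tlog(K)\tilde H}$ since $(K+1-j)(\Delta^{(j)})^{-2}\le\tilde H$, so each of the $\le K(K-1)$ deviation events defining $\mathcal{E}$ fails with probability $\lesssim\exp(\Psi(\Phi-\frac{T-K}{c\,\tlog(K)\,\tilde H}))$, and a union bound over phases and arms gives $\PP[\mathcal{E}^c]\lesssim K^2\exp(\Psi(\Phi-\frac{T-K}{c\,\tlog(K)\,\tilde H}))$, which is the stated bound after choosing $c=9$ to absorb the constants, the ceiling in $n_j$, and the $\tlog(K)$-versus-$\tlog(K-1)$ slack from the Abel summation. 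The step I expect to be the main obstacle is item (ii) together with the hallucination bookkeeping: transporting the \citet{audibert2010best} accounting into the combinatorial setting, where ``deciding'' an arm is mediated by the version space and the \emph{normalized} rather than raw empirical gap, and where earlier (correct) decisions inject favorable bias via hallucinated coordinates that must be shown never to corrupt a later phase's deviation statement; the concentration inequality itself is a routine re-run of the proof of \pref{lem:normalized_ineq} with $T/K$ replaced by the phase counts $n_j$, and the budget accounting is elementary.
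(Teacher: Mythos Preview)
Your overall architecture matches the paper's: a successive-rejects schedule with $n_t=\lceil(T-K)/(\tlog(K)(K+1-t))\rceil$, hallucinated samples for decided arms, the normalized regret inequality to define the good event $\mathcal{E}$, and an induction showing every round's decision agrees with $v^\star$. The paper's algorithm is in fact simpler than yours: it drops the disagreement test entirely and at each round decides the single surviving arm maximizing the empirical \emph{min}-margin $\hat\Delta_{t,a}=\min_{v:\,a\in\hat v_t\ominus v}\langle\hat\mu_t,\hat v_t-v\rangle/d(\hat v_t,v)$. Your $\max$ here is either a typo or an error --- with $\max$, an arm can look ``easy'' even when some alternative $u$ with $a\in u\ominus\hat v_t$ is empirically indistinguishable from $\hat v_t$, and the comparison argument below requires the $\min$.

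The genuine gap is item (ii). You assert that ``empirical and true normalized margins agree up to $t_j$ on $\mathcal{E}$, so this is the standard successive-rejects accounting''. But the empirical margin is centered at $\hat v_j$ while $\Delta_a$ is centered at $v^\star$, and $\mathcal{E}$ only controls deviations $\langle v-v^\star,\hat\mu_j-\bar\mu_j\rangle$ anchored at $v^\star$; nothing in your sketch transports this to $\hat\Delta_{j,a}$. The paper does \emph{not} prove the selected arm has large true gap; it proves correctness directly. The key step (their Fact~5) supposes $\hat a_t\in\hat v_t\setminus v^\star$, lets $a_t^\star$ be the surviving arm with largest true gap (so $\Delta_{a_t^\star}\ge\Delta^{(t)}$, your ``at most $j-1$ arms'' observation), lets $\tilde v$ witness $\hat\Delta_{t,a_t^\star}$, and routes through $v^\star$ via the triangle inequality and the mediant:
\[
\hat\Delta_{t,\hat a_t}\ \ge\ \hat\Delta_{t,a_t^\star}\ =\ \frac{\langle\hat\mu_t,\hat v_t-\tilde v\rangle}{d(\hat v_t,\tilde v)}\ \ge\ \min\Bigl\{\tfrac{\langle\hat\mu_t,\hat v_t-v^\star\rangle}{d(\hat v_t,v^\star)},\ \tfrac{\langle\hat\mu_t,v^\star-\tilde v\rangle}{d(v^\star,\tilde v)}\Bigr\}.
\]
Two preliminary facts are needed for this to work: that $a_t^\star\in\hat v_t\Leftrightarrow a_t^\star\in v^\star$ (so $a_t^\star\in v^\star\ominus\tilde v$ and $\Delta_{a_t^\star}$ bounds the second ratio), and that $\langle\hat\mu_t,v^\star-\tilde v\rangle>0$ (so the mediant applies). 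Combined with the trivial upper bound $\hat\Delta_{t,\hat a_t}\le\langle\hat\mu_t,\hat v_t-v^\star\rangle/d(\hat v_t,v^\star)$ from the contradiction hypothesis, and $\mathcal{E}$, this forces $\Delta_{a_t^\star}\le 2c\Delta^{(t)}<\Delta^{(t)}$ with $c=1/3$. This routing-through-$v^\star$ step is precisely what your sketch is missing; without it your (i)$+$(ii) decomposition does not close, since (i) only fires when $\Delta_a>t_j$ and (ii) as written does not establish this for the selected arm.
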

See~\pref{app:fixed_budget} for the proof. At a high level, the
savings over a na\"{i}ve analysis are similar to~\pref{thm:fixconsc}.
By using the normalized regret inequality, we obtain a refined
dependence on the hypothesis complexity, replacing $\log |\Vcal|$ with
the potentially much smaller $\log |\Bcal(k,v^\star)|$ (implicitly
through the $\Phi$ parameter). To compare, non-interactive methods
scale with $K(\Delta^{(1)})^{-2}$ instead of $\tilde{H}$, so the bound
is never worse, but it can yield an improvement when the arm gaps are
not all equal, which results in $\tilde{H} < K(\Delta^{(1)})^{-2}$.
Unfortunately, the
algorithm is not oracle-efficient.

\subsubsection{A refined fixed confidence guarantee}

We also derive a sharper sample complexity bound for the fixed
confidence setting. First, define
\begin{align*}
  D(v,v') \triangleq \max\{\log|\Bcal(d(v,v'),v)|,\log|\Bcal(d(v,v'),v')|\},
\end{align*}
to be the symmetric log-volume. We use two new instance-dependent complexity measures:
\begin{align*}
H^{(1)}_a \triangleq \max_{v: a \in v\ominus v^\star}\frac{d(v, v^\star)}{\langle \mu, v^\star - v \rangle^2}, \qquad H^{(2)}_a \triangleq \max_{v: a \in v \ominus v^\star} \frac{d(v,v^\star) D(v,v^\star)}{\langle \mu, v^\star - v\rangle^2}.
\end{align*}
The two definitions provide more refined control on the two terms
in~\pref{thm:fixconsc}. Specifically $H^{(1)}$ replaces the minimum
distance $\Psi$ with the distance to the hypothesis maximizing the
complexity measure. Similarly $H^{(2)}$ replaces the volume measure
$\Phi$ with a notion particular to the maximizing hypothesis.
Using these definitions, we have the following fixed-confidence
guarantee.
\begin{theorem}[Refined fixed confidence guarantee]\label{thm:refinefixcon}
There exists a fixed confidence algorithm (\pref{alg:inefffixcon} in~\pref{app:refined_proof}) that guarantees $\PP[\hat{v} \ne v^\star] \le \delta$ with sample complexity
\begin{align*}
T \le 64 \sum_{a\in[K]} H^{(1)}_{a} \left( 2\log(64H^{(1)}_{a}) + \log\frac{\pi^2 K}{\delta}\right) + 64 H^{(2)}_a.
\end{align*}
\end{theorem}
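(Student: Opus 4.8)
### Proof Proposal for Theorem~\ref{thm:refinefixcon}

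The plan is to design an elimination-style algorithm analogous to \pref{alg:main}, but with a \emph{per-arm adaptive threshold} that is tuned to the refined complexity measures $H^{(1)}_a$ and $H^{(2)}_a$ rather than to the global parameters $\Phi$ and $\Psi$. The core observation driving the whole argument is that the normalized regret inequality (\pref{lem:normalized_ineq}) is not the sharpest statement available: the union bound over $v \in \Vcal$ in its proof can be refined to a union bound over the \emph{sphere} $\Bcal(k, v^\star)$ for each radius $k$ separately, and then over $k$. This localized version replaces the worst-case growth rate $\Phi$ by the actual log-volume $\log|\Bcal(d(v,v^\star),v^\star)|$ appearing at the relevant distance, and replaces $1/\Psi$ by $1/d(v,v^\star)$. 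Since the algorithm here is allowed to be computationally inefficient, I would work directly with $\Vcal$ (enumerating spheres), which sidesteps all the LP/FTPL machinery of~\pref{thm:effdis}.

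First I would prove the \emph{localized martingale normalized regret inequality}: for a sampling scheme that queries each surviving arm once per round, with $\hat\mu_t$ the (hallucination-corrected) empirical mean after $t$ rounds, simultaneously over all $v \in \Vcal$ and all $t$,
\[
\frac{|\langle v^\star - v, \hat\mu_t - \mu\rangle|}{d(v^\star,v)} \le \sqrt{\frac{c}{t}\left(\frac{D(v,v^\star) + \log(\pi^2 K t^2/\delta)}{d(v,v^\star)}\right)}
\]
holds with probability $\ge 1-\delta$. This follows from a sub-Gaussian tail bound on $\langle v^\star - v, \hat\mu_t - \mu\rangle$ (a sum of $d(v,v^\star)$ mean-zero terms, variance parameter $d(v,v^\star)/t$), a union bound over the at most $|\Bcal(k,v^\star)|$ hypotheses at each distance $k$ — giving the $D(v,v^\star)$ term — and a further union bound over $k \le K$ and over $t$ with the usual $\sum_t 1/t^2$ trick; the hallucination bias favors $v^\star$ exactly as in the discussion preceding \pref{alg:main}, so it only helps. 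Next, on this good event: (i) $v^\star$ is never eliminated, because the elimination criterion ``$v$ is cut when some $u$ beats it by more than the threshold'' applied with $u = v^\star$ would require $\langle \mu, v^\star - v\rangle/d(v^\star,v)$ to be dominated by twice the deviation bound evaluated at $v$, which the good event rules out; and (ii) for any arm $a$, once the round count $t$ exceeds roughly $64\, H^{(1)}_a(2\log(64 H^{(1)}_a) + \log(\pi^2 K/\delta)) + 64 H^{(2)}_a$, every $v$ with $a \in v \ominus v^\star$ has been eliminated, because for such $v$ the gap $\langle\mu, v^\star-v\rangle$ exceeds twice the (now small enough) deviation at $v$ — and the definitions of $H^{(1)}_a, H^{(2)}_a$ are precisely the $\max_v$ of the two resulting terms, so the worst such $v$ dictates exactly the stated per-arm sample count. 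Summing the per-arm query counts over $a$ gives the claimed bound (the two $64\,H^{(2)}_a$ and $64\,H^{(1)}_a(\dots)$ contributions arise from separately inverting the $D$-term and the $\log(t)/t$-term of the deviation bound, using a standard ``$t \ge c\alpha(\log(c\alpha\beta)+\beta)$ implies $\sqrt{c\alpha(\beta + \log t)/t} \le$ target'' lemma).

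The main obstacle I anticipate is \textbf{making the per-arm stopping argument consistent with a single coupled sampling process}. Unlike a fixed-budget or uniformly-sampled setting, here different arms stop being queried at different (data-dependent) rounds, so $\hat\mu_t(a)$ is an average over a round count that varies across coordinates; worse, the elimination of a hypothesis $v$ depends on \emph{all} coordinates of $\hat\mu_t$ simultaneously, including those arms that may have already been frozen by hallucination. I would handle this exactly as the paper handles \pref{thm:fixconsc}: the hallucinated value $2\hat v_t(a) - 1$ is chosen so that all \emph{surviving} hypotheses agree with it on arm $a$, hence the inner product $\langle v^\star - v, \hat\mu_t - \mu\rangle$ for surviving $v$ is unaffected in the direction of the bias (the bias term is nonnegative in the ``$v^\star$ is better'' direction), which is what lets the martingale concentration go through with $t$ = round index rather than a per-arm count. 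The second delicate point is verifying that the \emph{refined} threshold schedule $\Delta_t$ — which must now be replaced by a per-hypothesis quantity inside the version-space definition — still yields a monotone version space (so that an eliminated $v$ stays eliminated); this again follows from the hallucination mechanism combined with the fact that the good event controls all rounds simultaneously. Everything else is routine: the $\log(64 H^{(1)}_a)$ factors come from the self-bounding inversion, and the constant $64$ is an artifact of the sub-Gaussian constant and the factor-of-two slack in the elimination rule.
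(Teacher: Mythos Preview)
Your proposal is correct in outline and matches the paper's strategy at the high level: a localized, per-hypothesis concentration inequality replacing the global $\Phi,\Psi$ bound, an elimination algorithm using these refined thresholds, and the two-part argument that $v^\star$ survives while every $v$ with $a\in v\ominus v^\star$ is cut once $t$ is large enough. Where you diverge from the paper is precisely on the point you flag as your main obstacle. You propose to retain the hallucination mechanism of \pref{alg:main} and control the resulting bias via a martingale bound as in \pref{lem:uniform_martingale}. The paper instead \emph{drops hallucination entirely}: \pref{alg:inefffixcon} enforces monotonicity explicitly by setting $\Vcal_{t+1}=\Vcal_t\setminus\Rcal_t$, and then observes that for any surviving $v\in\Vcal_t$, every arm in $v\ominus v^\star$ was queried at \emph{every} round $\le t$ (because $v$ and $v^\star$ both survive and disagree there). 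Hence $\hat\mu_t$ restricted to $v\ominus v^\star$ is an ordinary iid average and the concentration reduces to a plain sub-Gaussian tail bound --- no martingale, no bias direction to track. This completely sidesteps the coupling you worried about, at the cost of explicit enumeration of $\Vcal_t$, which is acceptable since the algorithm is declared inefficient anyway. Your route should also succeed, but carries more machinery than needed; in particular your claim that hallucination yields a monotone version space is not quite what holds (nor is it needed --- what matters is that the per-hypothesis threshold $\epsilon'_t(v,v^\star)$ is monotone in $t$). One further small point: the \emph{symmetrized} volume $D(v,v')$ does not arise purely from the union bound as you suggest --- the union bound alone produces the asymmetric $\log|\Bcal(d(v,v^\star),v^\star)|$ --- but is needed so that the elimination threshold $\epsilon'_t(u,v)$ is computable by the algorithm, which cannot know which of $u,v$ equals $v^\star$.
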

To understand this bound and compare with~\pref{thm:fixconsc}, note
that we always have $H_a^{(1)} \leq \rbr{\Delta_a^2\Psi}^{-1} $ and
$H_a^{(2)} \leq \frac{\Phi}{\Delta_a^2}$.  As
such,~\pref{thm:refinefixcon} improves on~\pref{thm:fixconsc} by
replacing worst case quantities $\Phi,\Psi$ with instance-specific
variants. However, the algorithm is not oracle-efficient.

\section{Examples and Comparisons}
\label{sec:examples}

As mentioned, the bound in~\pref{thm:fixconsc} is somewhat
incomparable to previous
results~\citep{chen2014combinatorial,gabillon2016improved,chen2017nearly}.
To provide general insights, we perform an
instance-independent, structure-specific analysis, fixing $\Vcal$ and
tracking combinatorial quantities but considering the least favorable
choice of $\mu$. Such an analysis reveals when one method dominates
another for hypothesis class $\Vcal$ \emph{for all mean vectors}, but
is less informative about specific instances. For a
complementary view, we study the \emph{homogeneous
  setting}, where $\mu = \Delta(2v^\star - \one)$ for some
$v^\star \in \Vcal$.

We also consider four specific examples.  The first is the
\topk\xspace problem, where $\Vcal$ corresponds to all ${K \choose s}$
subsets. The second is \textsc{DisjSet}, where there are $K$ arms and
$K/s$ hypotheses each corresponding to a disjoint set of $s$ arms,
generalizing an example of~\citet{chen2017nearly}.\footnote{As they
  argue, \textsc{DisjSet} is important because it can be embedded in other
  combinatorial structures, like disjoint paths.}
 The third and fourth
examples are \matching\xspace from~\pref{ex:matching} and
$\sqrt{s}$-\biclique\xspace from~\pref{ex:biclique}.

Throughout, we ignore constant and logarithmic factors, and we use
$\lesssim, \asymp$ to denote such asymptotic comparisons.  The main
combinatorial parameters are $\Psi \triangleq \min_{u,v \in \Vcal}
d(u,v)$, $D \triangleq \max_{u,v \in \Vcal} d(u,v)$, $\log |\Vcal|$,
and our combinatorial term $\Lambda \triangleq (\Phi + 1/\Psi)$.  $T$
denotes our sample complexity bound from~\pref{thm:fixconsc}, which we
instantiate in the the top row of~\pref{tab:homogeneous} for the four
examples in the homogeneous setting. All calculations are deferred
to~\pref{app:comparison}.


\paragraph{Comparison with~\citet{chen2014combinatorial}.}
The bound of~\citet{chen2014combinatorial} is $\tilde{O}\rbr{\sum_a
  \frac{\textrm{width}^2}{(\Delta^{(C)}_a)^2}}$, where $\Delta_a^{(C)}
\triangleq \min_{v: a \in v \ominus v^\star}\langle \mu, v^\star -
v\rangle$ is an unnormalized gap, and $\textrm{width}$ is the size of
the largest augmenting set in the best collection of augmenting sets
for $\Vcal$. In contrast, in our bound of
$\tilde{O}(\Lambda\sum_a\Delta_a^{-2})$, the normalized gaps that we
use incorporate the distance between sets and our combinatorial term
$\Lambda \lesssim 1$ is small. The structure-specific relationship
with our bound is
\begin{align*}
T_{\textrm{Chen14}}\cdot \frac{\Psi^2}{\textrm{width}^2}\Lambda \lesssim T \lesssim T_{\textrm{Chen14}}\cdot \frac{D^2}{\textrm{width}^2}\Lambda.
\end{align*}
Therefore, whenever $\textrm{width}$ is comparable with the diameter
$D$, our bound is never worse and, further, if $\Lambda < 1$ our bound
provides a strict improvement. \textsc{DisjSet} is precisely such an
example, where our bound is a factor of $s$ better than that
of~\citet{chen2014combinatorial} for all instances. On the other hand,
in \topk, we have $\textrm{width} = \Psi = 1$ but $D = 2s$ and
$\Lambda \asymp 1$ so our bound is never better, but can be worse by a
factor of up to $s^2$, depending on $\mu$. As a final observation, in
the homogeneous case, we have $\Delta_a \geq
\frac{\Delta_a^{(C)}}{\textrm{width}}$ and so our bound is never worse
and is strictly better whenever $\Lambda < 1$.

Turning to the examples in the homogeneous case, we instantiate the
bound of~\citet{chen2014combinatorial} in the second row
of~\pref{tab:homogeneous}.  Our bound matches theirs for \topk\xspace and is
polynomially better for \textsc{DisjSet}, \matching, and \biclique.

\paragraph{Comparison to~\citet{gabillon2016improved}.}
\citet{gabillon2016improved} use a normalized definition of hypothesis
complexity similar to our $\Delta_v(\mu)$, but they compare each
hypothesis $v$ to its complement $C_v \in \Vcal$ where $C_v \triangleq
\argmax_{v' \ne v} \frac{\langle \mu, v' - v\rangle}{d(v',v)}$. They
then define an arm complexity as $\Delta^{(G)}_a \triangleq \min_{v: a \in v
  \ominus C_v}\frac{\langle \mu, C_v - v\rangle}{d(C_v,v)}$, and
obtain the final bound $\tilde{O}\rbr{\sum_a(\Delta_a^{(G)})^{-2}}$.  In
contrast, we always compare $v$ with $v^\star$ and so $\Delta_a \leq
\Delta_a^{(G)}$, but our bound exploits favorable structural
properties of the hypothesis class by scaling with $\Lambda$, which is
small. The structure-specific relationship is
\begin{align*}
T_{\textrm{Gabillon16}}\Lambda \lesssim T \lesssim T_{\textrm{Gabillon16}}\cdot \frac{D^2}{\Psi^2}\Lambda.
\end{align*}
Three observations from above apply here as well: (1) For
\textsc{DisjSet}, our bound yields a factor of $s$ improvement on all
instances, (2) for \topk, our bound is never better but can be a
factor of $s^2$ worse on some instances, and (3) our bound is never
worse in the homogeneous case and is an improvement whenever $\Lambda
< 1$. On the specific examples in the homogeneous case, we obtain a
polynomial improvement on \biclique, where $\Psi=1/\sqrt{s}$
(See~\pref{tab:homogeneous}).


\begin{table}
\label{tab:homogeneous}%
\begin{center}%
\renewcommand{\arraystretch}{1.2}
  \begin{tabular}{r | c | c | c | c}%
    Sample complexity & \topk & \textsc{DisjSet} & \matching & \biclique \\
    \hline\hline
    \pref{thm:fixconsc} & $\Theta(K)$ & $\Theta(K/s)$ & $\order(K)$ & $\order(K/\sqrt{s})$ \\
    \hline
    \citet{chen2014combinatorial} & $\Theta(K)$ & $\Theta(K)$ & $\Theta(K^2)$ & $\Theta(Ks)$ \\
    \hline
    \citet{chen2017nearly} & $\Theta(K)$ & $\Theta(K/s)$ & $\Omega(K^{3/2})$ & $\Omega(\sqrt{Ks} + K/\sqrt{s})$ \\ 
    \hline
    \citet{gabillon2016improved} & $\Theta(K)$ & $\Theta(K)$ & $\Theta(K)$ & $\Theta(K)$
  \end{tabular}%
\end{center}%
\ifthenelse{\equal{\version}{arxiv}}{}{\vspace{-0.5cm}}
\caption{Guarantees for four algorithms on specific
  examples, with $\mu =
  \Delta (2v^\star - 1)$, ignoring logarithmic factors. All bounds
  scale with $1/\Delta^2$, which is suppressed. For homogeneous
  problems,~\pref{thm:fixconsc} is never worse than prior results and
  can be polynomially better.}
\vspace{-0.5cm}
\end{table}

\paragraph{Comparison to~\citet{chen2017nearly}.}
Finally,~\citet{chen2017nearly} introduce a third arm complexity
parameter based on the solution to an optimization problem, which they
call $\Low$. They prove a fixed-confidence lower bound of $\Omega(\Low
\log(1/\delta))$ and an upper bound of $\tilde{O}\rbr{\Low \log
  (|\Vcal|/\delta)}$.
In general, the sharpest
structure-specific relationship we can obtain is
\begin{align*}
T_{\textrm{Chen17}}\cdot\frac{\Lambda}{\log |\Vcal|} \lesssim T \lesssim T_{\textrm{Chen17}}\cdot \frac{D^2}{\log |\Vcal|}\Lambda.
\end{align*}
The second inequality results from a rather crude lower bound on
$\Low$ and is therefore quite pessimistic.  Indeed, the factor on the
right hand side is always at least $1$, so this bound does not reveal
any structure where we can improve on~\citet{chen2017nearly} for all
choices of $\mu$.  Unfortunately it is difficult to
relate $\Low$ to natural problem parameters in general.

On the other hand, in the homogeneous case, we can bound $\Low$
precisely in examples, yielding the third row
of~\pref{tab:homogeneous}.
Roughly speaking, our bound replaces $\log |\Vcal|/\Psi$ with
$\Lambda$, which is always smaller, leading to polynomial improvements
in \biclique\xspace, when~$s > \sqrt{K}$, and \matching.

\paragraph{Final Remarks.}
We close this section with some final remarks. 
First, our result shows that $\log(|\Vcal|)$ dependence is not
necessary for many structured classes. This does not contradict the
lower bound in Theorem 1.9 of~\citet{chen2017nearly}, which constructs
certain pathological classes.

Second, we believe that the optimization-based measure \textrm{Low},
corresponds to the sample complexity for verifying that a proposed $v$
is optimal. Indeed the bound of~\citet{chen2017nearly} is optimal in
the extremely high-confidence setting ($\delta \leq 1/|\Vcal|$), where
high-probability verification dominates the sample complexity, yet it
is more natural to consider polynomially- rather than
exponentially-small $\delta$. In the moderate-confidence case,
exploration to find a suitable hypothesis $v$ is the dominant cost,
but the upper bound of~\citet{chen2017nearly} can be suboptimal
here. We believe the $\Omega(\textrm{Low}\log(1/\delta))$ lower bound
is loose in this regime, but are not aware of sharper lower bounds.

Finally, we note that for the fixed-confidence setting it is easy to
achieve the best of all of these guarantees, simply by running the
algorithms in parallel. For example, by interleaving queries issued
by~\pref{alg:main} and the algorithm of~\citet{chen2017nearly}, we
obtain a sample complexity of $2\min\{T, T_{\textrm{chen17}}\}$ with
probability $1-2\delta$. This yields an algorithm that is never worse
than the non-interactive minimax optimal rate \emph{and} is
instance-optimal in the high-confidence regime.



}

\section{Discussion and Open Problems}
This paper derives new algorithms for combinatorial pure
exploration. The algorithms represent a new sample complexity
trade-off and importantly are never worse than any non-interactive
algorithm, contrasting with prior results. Moreover, our fixed
confidence algorithm can be efficiently implemented whenever the
combinatorial family supports efficient linear optimization.

We close with some open problems. 
\begin{itemize}
\item In the homogeneous \biclique, our bound is
  $O(K/(\sqrt{s}\Delta^2))$, yet one can actually achieve
  $O(\frac{1}{\Delta^2}(\sqrt{K} +
  K/s))$ with a specialized algorithm~\citep{tanczos2013adaptive}. 
 Whether the faster rate is achievable
  beyond the homogeneous case is open, and seems related to the
  fact that active learning at best provides distribution-dependent
  savings in general but can provide exponential savings with random
  classification noise (analogous to our homogeneous setting).



\item Relatedly, settling the optimal sample complexity for
  combinatorial pure exploration is open. For lower bounds, the
  technical barrier is to capture the multiple testing phenomena,
  which typically requires Fano's Lemma. 
For upper bounds, some
  interesting algorithms to study are
  median-elimination~\citep{even2006action},
  explore-then-verify~\citep{karnin2016verification}, and
  sample-and-prune~\citep{chen2016pure}, all of which yield
  optimal algorithms in special cases.
\end{itemize}
We hope to study these questions in future work.



\section*{Acknowledgements}
We thank Sivaraman Balakrishnan for formative and insightful discussion. 
AK is supported in part by NSF Award IIS-1763618.
\newpage

\appendix
\section{Non-interactive analysis}
\label{app:noninteractive}

\subsection{Proof of~\pref{lem:normalized_ineq}}
Observe that $\frac{\inner{v^\star - v}{\hat{\mu} -
    \mu}}{d(v^\star,v)}$ is the average of $\frac{T}{K}d(v^\star,v)$
centered sub-Gaussian random variables, each with variance parameter
$1$. This follows because $v^\star - v \in \{-1,0,+1\}$ is non-zero on
exactly $d(v^\star,v)$ coordinates, and because $\hat{\mu} - \mu$ is
the average of $T/K$ sub-Gaussian random vectors. Therefore, by
a Subgaussian tail bound  and a union bound
\begin{align*}
\PP\sbr{\exists v \in \Vcal : \frac{\abr{\inner{v^\star - v}{\hat{\mu} - \mu}}}{d(v^\star,v)} \geq \epsilon} &\leq \sum_{v \in \Vcal} \PP\sbr{ \frac{\abr{\inner{v^\star - v}{\hat{\mu} - \mu}}}{d(v^\star,v)} \geq \epsilon }\\
& \leq 2 \sum_{v \in \Vcal} \exp\rbr{ \frac{-Td(v^\star,v)\epsilon^2}{2 K} }\\
& = 2\sum_{k=\Psi}^K \abr{\Bcal(k,v^\star)} \exp\rbr{\frac{-Tk\epsilon^2}{2K}} \\
& \leq 2K \exp\rbr{\max_{\Psi \leq k \leq K} \log \abr{\Bcal(k,v^\star)} - \frac{Tk\epsilon^2}{2K}}.
\end{align*}
Unpacking the definitions of $\Psi,\Phi$ and setting $\epsilon =
\sqrt{\frac{2K}{T}\rbr{\Phi + \frac{\log(2K/\delta)}{\Psi}}}$, this
bound is at most $\delta$, which proves the lemma.

\subsection{Proof of upper bound in~\pref{thm:noninteractive_upper}}
Recall the definition of the gap $\Delta_v(\mu)$, and observe that
probability of error for the MLE $\hat{v} = \argmax_{v \in \Vcal}\inner{\hat{\mu}}{v}$ is
\begin{align*}
\PP\sbr{\hat{v} \ne v^\star} &\leq \PP \sbr{ \exists v \in \Vcal: \inner{\hat{\mu}}{ v - v^\star} > 0} = \PP \sbr{ \exists v \in \Vcal: \inner{\hat{\mu} - \mu}{v - v^\star} > \inner{\mu}{v^\star - v}}\\
& = \PP \sbr{ \exists v \in \Vcal: \frac{\inner{\hat{\mu} - \mu}{v - v^\star}}{d(v,v^\star)} > \Delta_v(\mu) }\\
& \leq \PP \sbr{ \exists v \in \Vcal: \frac{\inner{\hat{\mu} - \mu}{v - v^\star}}{d(v,v^\star)} > \min_{v \ne v^\star} \Delta_v(\mu) }.
\end{align*}
Now the result follows from~\pref{lem:normalized_ineq}, specifically
by setting the right hand side of the normalized regret inequality to
be at most $\min_{v \ne v^\star} \Delta_v(\mu)$ and solving for $T$.

\subsection{Proof of lower bound in~\pref{thm:noninteractive_upper}}
The proof here is based of Fano's inequality and follows the analysis
of~\cite{krishnamurthy2016minimax}. Let us simplify notation and
define $P_v = \PP_{\mu = \Delta (2v-\one)}$ to be the distribution
where $T/K$ samples are drawn from each arm and $\mu= \Delta(2v -
\one)$. For any distribution $\pi$ supported on $\Vcal$ let $P_\pi$
denote the mixture distribution where first $v^\star \sim \pi$ and
then the samples are drawn from $P_{v^\star}$. With this notation,
Fano's inequality (with non-uniform prior) shows that for any
algorithm
\begin{align*}
  \sup_{v^\star \in \Vcal} \PP_{v^\star}[\hat{v} \ne v^\star] \ge \EE_{v^\star \sim \pi} \PP_{\mu = \Delta (2v^\star-\one)}[\hat{v} \ne v^\star] \ge 1 - \frac{\EE_{v \sim \pi} KL(P_v || P_{\pi}) + \log 2}{H(\pi)}.
\end{align*}
(This slightly generalizes one standard version of Fano's inequality,
where $\pi$ is uniform over $\Vcal$, so the denominator is $\log
|\Vcal|$.)  Let $\tilde{v} \in \Vcal$ denote the candidate achieving
the maximum in the definition of $\Phi$ and define the prior
\begin{align*}
\pi(v) \propto \exp\left( -\frac{T\Delta^2\|\tilde{v} - v\|_2^2}{K}\right).
\end{align*}
With this definition, the entropy term becomes
\begin{align*}
H(\pi) &= \log \left(\sum_{v} \exp\left( -\frac{T\Delta^2 \|\tilde{v} - v\|_2^2}{K}\right)\right) + \sum_v\pi(v) \frac{T\Delta^2\|v - \tilde{v}\|_2^2}{K}\\
& = \log \left(\sum_{v} \exp\left( -\frac{T\Delta^2 \|\tilde{v} - v\|_2^2}{K}\right)\right) + 2 \sum_v\pi(v)KL(P_v || P_{\tilde{v}}).
\end{align*}
Here in the last step we use the definition of the Gaussian KL, and
the tensorization property for KL-divergence. As for the
KL term in the numerator, it is not too hard to see that
\begin{align*}
\sum_v \pi(v) KL(P_v || P_\pi) \le \sum_v \pi_v KL(P_v || P_\pi) + KL(P_\pi || P_{\tilde{v}})  = \sum_v \pi_v KL(P_v || P_{\tilde{v}}).
\end{align*}
Thus, we have proved the lower bound if
\begin{align*}
  \sum_v \pi(v) KL(P_v||P_{\tilde{v}}) + \log(2) \le \frac{1}{2}\log \left(\sum_{v} \exp\left( -\frac{T\Delta^2 \|\tilde{v} - v\|_2^2}{K}\right)\right) + \sum_v\pi(v)KL(P_v || P_{\tilde{v}}).
\end{align*}
After simple algebraic manipulations, we get
\begin{align*}
\log(2 \log(2)) \le \sum_v \exp\left( - \frac{T\Delta^2}{K} d(v,\tilde{v})\right) = \sum_k \exp\left(\log |\Bcal(k,\tilde{v})| - \frac{Tk\Delta^2}{K}\right).
\end{align*}
Since the sum dominates the maximum, $\tilde{v}$ realizes
the definition of $\Phi$, and since $2\log(2) \le 3$, we obtain the
result. More formally, if
\begin{align*}
T \le \frac{K}{\Delta^2}\left(\Phi - \log\log 3\right),
\end{align*}
then
\begin{align*}
\sum_k \exp\left(\log |\Bcal(k,\tilde{v})| - \frac{Tk\Delta^2}{K}\right) &\ge \max_k \exp\left(\log |\Bcal(k,\tilde{v})| - \Phi k + \log \log 3 \right)\\
& = \exp(\log \log 3) = \log 3.
\end{align*}
Thus if $T$ is smaller than above, the minimax probability of error is
at least $1/2$.

\begin{remark}
As we have discussed, the lower bound identifies the minimax rate up
to constants for examples where $\Phi \geq \log(K)/\Psi$, in the
moderate confidence regime where $\delta =
\textrm{poly}(1/K)$. Obtaining the optimal $\delta$ dependence even
for non-interactive algorithms, seems quite challenging and is an
intriguing technical question for future work.
\end{remark}

\begin{remark}
We emphasize here that the lower bound applies only for
non-interactive algorithms and only in the homogeneous case. A more
refined instance-dependent bound is possible with our technique but is
not particularly illuminating.
\end{remark}

\section{Calculations for examples}
\label{app:comparison}

\subsection{Instantiations of~\pref{thm:fixconsc}}
To instantiate~\pref{thm:fixconsc} for the examples, we need to
compute $\Phi, \Psi$, and $\Delta_a$ for each arms $a$. In the
homogeneous case, we always have $\Delta_a = \Delta$. We now compute $\Phi,\Psi$ for the four examples. 

For \topk, we have $\Psi=2$ and
\begin{align*}
\Phi = \max_{k} \frac{\log\rbr{{K-s \choose k} {s\choose s-k}}}{k} \leq O\rbr{\log(K)}.
\end{align*}
Thus the sample complexity is $T \lesssim \frac{K}{\Delta^2}$, where recall that $\lesssim$ ignores logarithmic factors.

For \textsc{DisjSet}, it is easy to see that $\Phi \asymp 1/(2s)$,
$\Psi = 2s$, so we have $T \lesssim
\frac{K}{s\Delta^2}$.

For \matching, $\Psi =4$ since we must switch at least two edges to
produce another perfect matching. To calculate $\Phi$, by symmetry we
may assume that the ``center" $v$ is the identity matching
$\{(a_1,b_1)\}_{i=1}^{\sqrt{K}}$ where $\{a_i\}, \{b_i\}$ form the two
partition cells. Then, as an upper bound, the number of matchings that
differ on $4s$ edges is at most ${\sqrt{K} \choose s} s! \leq
K^s$. (Actually this bounds the ball volume and hence the sphere
volume.) Thus $\Phi \leq O(\log(K))$ and so we have $T \lesssim
\frac{K}{\Delta^2}$.



For \biclique, $\Psi = 2\sqrt{s}$ which arises by swapping a single
node on either side of the partition. The computation of $\Phi$ is
more involved.  The idea is that for every vertex that we swap into the
biclique, we switch $\Theta(\sqrt{s})$ edges, formally at least
$\sqrt{s}/2$ edges but no more than $\sqrt{s}$. Then rather than
optimizing over the radius in the decision set, we optimize over the
number of vertices swapped in on both sides of the partition, which we
denote $s_L,s_R$.  For a set $v$, note that we can obtain any set by
swapping $s_L$ column and $s_R$ rows of $v$
\begin{align*}
\max_{k} \frac{1}{k} \left(\log|\Bcal(k,v)|\right) &\le \max_{s_R,s_L} \frac{2}{\sqrt{s} (s_L+s_R)}
\log\left( {\sqrt{K}\choose s_L} {\sqrt{K} \choose s_R}\right) \leq O\rbr{\frac{\log(K)}{\sqrt{s}}}.
\end{align*}
This gives $T \lesssim \frac{K}{\sqrt{s}\Delta^2}$.

\subsection{Comparison with~\cite{chen2014combinatorial}.}
To compare with~\cite{chen2014combinatorial}, we must introduce some
of their definitions. Translating to our terminology, they define
$\Delta^{(C)}_a = \min_{v: a \in v \ominus v^\star} \langle\mu, v^\star - v\rangle$,
which differs from our definition since it is not normalized.  They
also define exchange classes and a notion of width of the decision
set. An exchange class is a collection of patches $b = (b_+,b_-)$
where $b_+,b_- \subset [K]$ and $b_+\cup b_- = \emptyset$, with
several additional properties. To describe them further define the
operator $v \oplus b = (v \setminus b-) \cup b_+$ and $v \oslash b =
(v\setminus b_+) \cup b_-$ where $v$ is interpreted as a subset of
$[K]$. Then a set of patches $\Bcal$ is an exchange class for $\Vcal$
if for every pair $v \ne v' \in \Vcal$ and every $a \in v\setminus
v'$, there exists a patch $b \in \Bcal$ such that (1) $a \in b_-$, (2)
$b_+ \subset v'\setminus v$, (3) $b_-\subset v \setminus v'$, (4)
$v\oplus b \in \Vcal$, and (5) $v'\oslash b \in \Vcal$. Then they
define the width
\begin{align*}
\textrm{width}(\Vcal) = \min_{\textrm{exchange classes } \Bcal} \max_{b
  \in \Bcal} |b_-| + |b_+|
\end{align*}
With these definitions, the fixed-confidence bound
of~\cite{chen2014combinatorial} is
\begin{align*}
\otil\left(\textrm{width}(\Vcal)^2\sum_a \frac{1}{(\Delta_a^{(C)})^2}\log(K/\delta)\right)
\end{align*}
where we have omitted a logarithmic dependence on the arm complexity
parameter $\Delta^{(C)}_a$.

For homogeneous \textsc{DisjSet} it is easy to see that
$\textrm{width}(\Vcal) \asymp s$ and $\Delta_a^{(C)} \asymp
s\Delta$. Hence their bound is $O(K\log(K)/\Delta^2)$.

For \matching,
number
the vertices on one side $a_1,\ldots,a_{\sqrt{K}}$ and on the other
side $b_1,\ldots,b_{\sqrt{K}}$. Let $v^\star$ be the matching with
edges $\{(a_i,b_i)\}_{i=1}^{\sqrt{K}}$. In the homogeneous case where
$\mu = \Delta (2v^\star-1)$, it is easy to see that $\Delta_a^{(C)} =
\Theta(\Delta)$ since for every edge $e$ (which correspond to the arms
in the bandit problem), there exist a matching that contains this
edge, that disagrees with $v^\star$ on exactly two
edges. Specifically, if $e = (a_i,b_j)$ then the matching that has
edge $(a_k,b_k)$ for all $k \ne i,j$ and edges $(a_i,b_j)$ and
$(a_j,b_i)$ has symmetric set difference exactly $4$.

On the other hand we argue that the width is $\Theta(\sqrt{K})$. This
is by the standard augmenting path property of the matching
polytope. In particular if $v^\star$ is as above and we define another
matching $v = \{(a_i,b_{i+1\mod \sqrt{K}})\}_{i=1}^{\sqrt{K}}$, then
the only patch for $v^\star,v$ is to swap all edges. Hence the bound
of~\cite{chen2014combinatorial}, in this instance is
$\otil\left( \frac{K^2}{\Delta^2}\log(K/\delta)\right)$
which is a factor of $K$ worse than the non-interactive algorithm in
this setting.

For \biclique\xspace, we have $\textrm{width}(\Vcal) \asymp s$ yet
$\Delta_a^{(C)} \asymp \sqrt{s}\Delta$. For the former, consider two
  bicliques $v,v'$ that disagree on all nodes on both sides of the
  partition. Then the smallest patch betweeen them is the trivial one
  $(v,v')$ since any other potential patch covers edges between the
  two bicliques (which are not contained in either one). For the
  latter, for any edge $a$ we can swap at most two nodes, one from
  each side, to cover this edge. Thus their bound is $O(Ks/\Delta^2)$.

For the worst case comparison, note that 
\begin{align*}
\Delta_a &= \min_{v: a \in v \ominus v^\star} \frac{\inner{\mu}{v^\star-v}}{d(v^\star,v)} \geq \min_{v: a \in v \ominus v^\star} \frac{\inner{\mu}{v^\star-v}}{D} = \frac{\Delta_a^{(C)}}{D}\\
\Delta_a & \leq \frac{\inner{\mu}{v^\star - v_a}}{d(v^\star,v_a)} \leq \frac{\Delta_a^{(C)}}{\Psi},
\end{align*}
where $v_a$ is the set that witnesses $\Delta_a^{(C)}$ and $D
\triangleq \max_{u,v\in \Vcal} d(u,v)$ is the diameter. Ignoring
logarithmic factors, our bound therefore satisfies
\begin{align*}
T_{\textrm{chen14}}\frac{\Psi^2}{\textrm{width}^2}\Lambda \leq T \leq T_{\textrm{chen14}}\frac{D^2}{\textrm{width}^2}\Lambda.
\end{align*}

\subsection{Comparison with~\cite{chen2017nearly}.}
As for~\cite{chen2017nearly}, their guarantee is
\begin{align*}
\otil\left(\Low(\Vcal) (\log(1/\delta) + \log |\Vcal| )\right),
\end{align*}
ignoring some logarithmic factors. Here $\Low(\Vcal)$ is the solution
to the optimization problem
\begin{align}
  \textrm{minimize} &\sum_a \tau_a
  \textrm{ s.t.} \sum_{a \in v \ominus v^\star} \frac{1}{\tau_a} \le \langle \mu, v^\star - v\rangle^2, \forall v \ne v^\star
  \textrm{ and } \tau_a \ge 0, \forall a \in [K].
\label{eq:low_program}
\end{align}
In the homogeneous case for bipartite matching, we show that
$\Low(\Vcal) = \Theta(K/\Delta^2)$. This proves what we want since
$\log(\Vcal) \asymp \sqrt{K}$ and hence the bound is a factor of
$\sqrt{K}$ worse than~\pref{thm:noninteractive_upper}.

The proof here is by passing to the dual of
Program~\pref{eq:low_program}. First we construct the Lagrangian
\begin{align*}
\Lcal (\tau, \alpha) =  \sum_a \tau_a + \sum_v \alpha_v \left( \sum_{a \in v \ominus v^\star}  \frac{1}{\tau_a} - \langle \mu, v^\star - v\rangle^2 \right).
\end{align*}
By weak duality, the solution of the primal problem is always lower
bounded by the solution of the dual problem
\begin{align*}
\min_\tau \max_\alpha \Lcal(\tau, \alpha) \ge \max_\alpha \min_\tau  \Lcal (\tau, \alpha).
\end{align*}
Taking the derivative with respect to $\tau$ we have
\begin{align*}
\frac{\partial\Lcal}{\partial \tau_a} = 1 - \sum_{v: a \in v \ominus v^\star} \alpha_v \left( \frac{1}{\tau_a^2} \right) = 0 \Rightarrow
\tau_a = \sqrt{\sum_{v: a \in v \ominus v^\star}\alpha_v},
\end{align*}
and plugging back into the Lagrangian gives
\begin{align}
\max_{\alpha_v \succeq 0} &\ \sum_a \sqrt{\sum_{v: a \in v \ominus v^\star}\alpha_v} + \sum_v \alpha_v\left(\sum_{a \in v \ominus v^\star}  \frac{1}{\sqrt{\sum_{v': a \in v' \ominus v^\star}
\alpha_{v'}}} -  \langle \mu, v^\star - v\rangle^2 \right)\notag \\
= \max_{\alpha_v \succeq 0} &\ 2 \sum_a \sqrt{\sum_{v: a \in v\ominus v^\star} \alpha_v} - \sum_v \alpha_v \langle \mu, v^\star -v\rangle^2. \label{eq:chen_dual}
\end{align}
By weak duality, any feasible solution here provides a lower bound on
$\Low(\Vcal)$. For matchings, we construct a feasible solution in a
similar way to the construction we used to analyze the bound
of~\cite{chen2014combinatorial}. Let $v^\star$ be the matching
$\{(a_i,b_i)\}_{i=1}^{\sqrt{K}}$. For every edge $(a_i,b_j)$, there is
a unique matching $v$ that disagrees with $v^\star$ on exactly $4$
edges, and for these matchings we will set $\alpha_v$ to some constant
value $\alpha$. We set $\alpha_v=0$ otherwise. This ensures that for
every $a \notin v^\star$, $\sum_{v: a \in v\ominus v^\star} \alpha_v =
\alpha$. On the other hand, for $a \in v^\star$, we get $\sum_{v: a
  \in v\ominus v^\star} \alpha_v = (\sqrt{K}-1)\alpha \ge \alpha$,
since we can swap out this edge with one of $\sqrt{K}-1$ other edges,
iterating over all other nodes on the other side of the partition. In
other words, for every arm $a \in [K]$, the first term is at least
$\sqrt{\alpha}$, while no more than $K$ $\alpha_v$s are non-zero. In
total, a lower bound on the dual program is given by
\begin{align*}
  \max_{\alpha \ge 0} 2K\sqrt{\alpha} - 4K\Delta^2\alpha.
\end{align*}
This simpler program is optimized with $\alpha = 1/(16\Delta^4)$ and
plugging back in reveals that
\begin{align*}
\Low(\Vcal) = \Omega(K/\Delta^2).
\end{align*}
This is all we need for our comparison, since $\Low(\Vcal)
\log(|\Vcal|) = \Omega(K^{3/2}/\Delta^2)$ in this case.


For \biclique, let us assume that $\sqrt{s}$ divides
$\sqrt{K}$. Considering the dual program \pref{eq:chen_dual}, we set 
$\alpha_v$ in the following way: We define the set that contains all the   
hypotheses that can be obtained by swapping the first row or the first column of $v^\star$ with another row or column to be set $V_c$. Note that 
$|V_c| = 2(\sqrt{K} - \sqrt{s})$. For $v\in V_c$, we set $\alpha_v = \alpha_1$.
We define a maximum set of disjoint hypotheses that does not share any rows or columns with
$v^\star$ to be $V_q$. Note that $|V_q| = (\frac{\sqrt{K} - \sqrt{s}}{\sqrt{s}})^2$. For $v\in V_q$ we set $\alpha_v = \alpha_2$. 
We discard all the other sets and set
$\alpha_v = 0$. For the remaining sets, note that for each arm $a\notin v^\star$, there is only one hypothesis $v$ such that that $a \in v^\star \ominus v$, let us call it $v_a$.
Thus the dual program (\ref{eq:chen_dual}) can be lower bounded as follow
\begin{align}
\text{(\ref{eq:chen_dual}) } \geq \max_{\alpha_1, \alpha_2} \sum_{\{a| v_a \in V_c\} } \sqrt{\alpha_1} + \sum_{\{a| v_a \in V_q\} } \sqrt{\alpha_2} - \sum_{v\in V_c} \alpha_1 s\Delta^2 - \sum_{v\in V_q} \alpha_2 s^2\Delta^2
\end{align}
Note that $|\{a \mid v_a \in V_c\}| = 2(\sqrt{K}-\sqrt{s})\sqrt{s}$ and $|\{a \mid v_a \in V_q\}| = (\sqrt{K}-\sqrt{s})^2$. 
Solving for $\alpha_1, \alpha_2$ gives $\alpha_1 = 1/s\Delta^4$, $\alpha_2 = 1/s^2\Delta^4$, and plugging these back into the dual gives
\begin{align*}
\Low(\Vcal) = \Omega\rbr{\frac{1}{\Delta^2}\rbr{\sqrt{K}-\sqrt{s} + \frac{(\sqrt{K}-\sqrt{s})^2}{s}}}.
\end{align*}
Recall that their sample complexity is $\Low(\Vcal)\log(\Vcal)$, where
$\log(\Vcal) \asymp \sqrt{s}$. This means that their sample complexity
is lower bounded as
\begin{align*}
\frac{(\sqrt{K} - \sqrt{s}) \sqrt{s}}{\Delta^2} + \frac{(\sqrt{K} - \sqrt{s})^2}{\sqrt{s}\Delta^2} = \Omega\rbr{\frac{1}{\Delta^2}\rbr{\sqrt{Ks} + \frac{K}{\sqrt{s}}}}
\end{align*}

For the worst case comparison, notice that if we set $\tau_i = \infty,
\forall i \neq a$, we obtain a lower bound for $\tau_a$ in each of the
constraints. Specifically, we get $\tau_a = \max_{v, a\in v^\star
  \ominus v } \frac{1}{\langle \mu, v^\star - v \rangle^2} >
\frac{1}{\langle \mu, v^\star - v_a \rangle^2}$, where $v_a$ is the
set that witness our complexity for arm $a$. Thus $\tau_a \ge
\frac{1}{D^2\Delta_a^2}$, so the worst case ratio between our bound
and their bound is $\frac{D^2}{\log|\Vcal|} \Lambda$. On the other
hand, it is easy to see that with $\tau_a = \frac{1}{\Delta_a^2}$,
using our definition of arm complexity, their program is feasible, and
so we have
\begin{align*}
\Low(\Vcal) \leq \sum_a \frac{1}{\Delta_a^2},
\end{align*}
which readily yields a lower bound on our complexity, in terms of theirs.

\subsection{Comparison with~\cite{gabillon2016improved}.}
In~\cite{gabillon2016improved}, the authors introduce a improved gap
by defining the complement of a set. Intuitively the complement is the
easiest set to compare with.  For any set $v \neq v^\star$, the gap is
\begin{align*}
\Delta^{(G)}_v = \max_{v', \langle\mu, v' - v \rangle > 0} \frac{\langle \mu, v' - v \rangle}{d(v',v)},
\end{align*}
and the set that achieves this maximum is the \emph{complement}
of $v$. A tie breaks in favor of the sets that are closer to $v$. The
gap of an arm $a$ is
$\Delta^{(G)}_a = \min_{v, a\in v\ominus v^\star} \Delta^{(G)}_v$,
and their sample complexity is
\begin{align*}
\order\left(\sum_a \frac{1}{(\Delta^{(G)}_a)^2} \log(K/\delta)\right),
\end{align*}
which is similar to the sample complexity
of~\cite{chen2014combinatorial} except the width is absorbed into the
new gap definition. As a consequence, this bound is never worse
than~\cite{chen2014combinatorial}.

In the homogeneous \biclique\xspace example, it is easy to see
$\Delta_a^{(G)} = \Delta$, since taking $v'=v^\star$ will always
achieve the maximum. Hence the bound becomes
$O(\frac{K}{\Delta^2}\log(K/\delta))$, which is $\Omega(\sqrt{s})$
worse than the bound in~\pref{thm:noninteractive_upper}.

For general $\mu$, note that for any set $v$, 
\begin{align*}
\Delta_v^{(G)} \geq \Delta_v =  \frac{\langle \mu, v^*-v \rangle}{d(v^*, v)} \geq \frac{\langle\mu, C_v-v \rangle}{d(C_v, v)} \frac{d(C_v,v)}{d(v^*, v)} \geq \Delta_v^{(G)} \frac{\Psi}{D},
\end{align*}
where $C_v$ is the complement of $v$. Hence the bounds satisfy
\begin{align*}
T_{\textrm{Gabillon16}} \cdot \Lambda \leq T \leq T_{\textrm{Gabillon16}} \cdot \frac{D^2}{\Psi^2}\Lambda.
\end{align*}


\section{Proofs}
\label{app:proofs}
In this section we provide the proofs of~\pref{thm:effdis}
and~\pref{thm:fixconsc}. Several lemmas and their proofs are provided
in~\pref{app:lemmas}.

\paragraph{Proof of~\pref{thm:effdis}.} 
We repeatedly use the following identity for the $\ell_1$ norm:
For any $u\in \{0, 1\}^K$ and any $x\in [0, 1]^K$, 
\begin{align}
\|x - u\|_1 = \langle x + u, \textup{\one} \rangle - 2\langle x,u \rangle. \label{eq:linl1}
\end{align}
This identity reveals that the disagreement region, $\Vcal_t$, is
polyhedral and hence Program~\pref{eq:dis_feasibility} is just a
linear feasibility problem. Now suppose that
Program~\pref{eq:dis_feasibility} is feasible and that
$x^\star \in \conv(\Vcal)$ is a feasible point. Then for every
distribution $p \in \Delta(\Vcal)$, $x^\star$ satisfies the linear
combination of the constraints weighted by $p \in \Delta(\Vcal)$,
which is precisely what we check by solving
Problem~\pref{eq:empfeassmall} and examining the objective value in~\pref{line:infeas} in each iteration of the algorithm. Hence by
contraposition, if the algorithm ever detects infeasibility, it must
be correct.

For the other direction, we use the regret bound for
Follow-the-Perturbed Leader~\citep{kalai2005efficient}. Succinctly,
when the learner makes decisions $d_t \in \Dcal \subset \RR^d$ and the
adversary chooses losses $\ell_t \in \Scal \subset \RR^d$, FTPL with
parameter $\epsilon \leq 1$ guarantees
\begin{align*}
\EE\sum_{t=1}^T\langle d_t, \ell_t\rangle - \min_{d \in \Dcal}
\sum_{t=1}^T\langle d, \ell_t\rangle \leq \epsilon RAT + D/\epsilon,
\end{align*}
where $D = \max_{d,d'\in \Dcal} \|d - d'\|_1, R = \max_{d \in
  \Dcal,\ell \in \Scal}|\langle d, s\rangle|$, and $A = \max_{\ell \in
  S}\|\ell\|_1$. Setting $\epsilon = \sqrt{D/(RAT)}$ gives
$2\sqrt{DRAT}$ regret. The algorithm chooses $d_t$ by sampling
$\sigma_{t} \sim \textrm{Unif}([0,1/\epsilon]^d)$ and playing $d_t =
\argmin_{d \in \Dcal} \langle d, \sigma_t +
\sum_{\tau=1}^{t-1}\ell_\tau \rangle$. This induces a distribution
over decisions $d_t$, which we denote by $p_t \in \RR^d$ and the
expectation accounts for this randomness. It will be important for us
that FTPL can accommodate adaptive adversaries, and hence the loss
$\ell_t$ can depend on $p_t$ but not on the random decision $d_t$.

In our case, we have $\Dcal = \conv(\Vcal)$, and we write $\ell_t =
\Delta \one - 2 \Delta x_t- \hat{\mu}$ where $x_t$ is the solution to
Program~\pref{eq:empfeassmall} in the $t^{\textrm{th}}$
iteration. This makes $D\le K$. Recall that $\hat{\mu}$ is
the empirical average of $y_t$. By Chernoff bound and a union bound,
with probability at least $1-\delta_1$, $\|\hat\mu - \bar\mu\|_1 < K
\sqrt{2\log(2K/\delta_1)}$.  
Since $\bar\mu \in [-1, 1]^K$, $x_t \in [0, 1]^K$,
$\Delta \in [0, 1]$, we get $A,R \le 5K\sqrt{\log(2K/\delta_1)}$ in our reduction.
So with $\epsilon = \sqrt{1 / (25KT\log(2K/\delta_1))}$ the regret is
upper bounded by $2K\sqrt{25KT\log(2K/\delta_1)}$.  Note that while $x_t$
and hence $\ell_t$ depends on the random choices of the learner
through $\tilde{p}_t$, we will actually apply the regret bound only on
the expectation, which we denote by $p_t$, which can be equivalently
viewed as the adversary sampling to generate $\tilde{p}_t$ and
$\ell_t$. To translate from $p_t$ to $\tilde{p}_t$ we need one final
lemma, which we prove in~\pref{app:lemmas}.
\begin{lemma}
\label{lem:ftpl_conc}
Let $p_t = \EE_{\sigma_t} \tilde{p}_t$ and let $\ell_t$ be any vector,
which may depend on $\tilde{p}_t$. Then with probability at least
$1-\delta$, simultaneously for all rounds $t \in [T]$
\begin{align*}
\left|\sum_{u \in \Vcal} (\tilde{p}_t(u) - p_t(u)) \langle u,\ell_t\rangle \right| \le 3K\sqrt{\frac{\log(2KT/\delta)}{2m}}.
\end{align*}
\end{lemma}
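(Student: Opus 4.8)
The plan is to observe that the quantity in question depends on the random empirical distribution $\tilde p_t$ only through its first moment, and then to control that first moment by an elementary coordinate-wise concentration bound that is \emph{independent} of the loss $\ell_t$. Since every $u \in \Vcal$ lies in $\{0,1\}^K$, write $\hat q_t \triangleq \sum_{u} \tilde p_t(u)\, u = \frac1m\sum_{i=1}^m u_{t,i}$ and $q_t \triangleq \sum_u p_t(u)\, u$, so that
\[
\sum_{u \in \Vcal} \big(\tilde p_t(u) - p_t(u)\big)\langle u, \ell_t\rangle = \langle \hat q_t - q_t,\ \ell_t\rangle .
\]
Conditioning on all randomness generated before round $t$ fixes the loss prefix $\sum_{\tau<t}\ell_\tau$, hence the FTPL distribution $p_t$ and the vector $q_t$; given this, $u_{t,1},\dots,u_{t,m}$ are i.i.d.\ draws from $p_t$, each coordinate $u_{t,i}(a)$ is Bernoulli, and $\EE[\hat q_t] = q_t$.

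Next I would apply Hoeffding's inequality per coordinate, $\PP\big[\,|\hat q_t(a) - q_t(a)| > s \mid \text{past}\,\big] \le 2e^{-2ms^2}$, and union bound over $a \in [K]$ and $t \in [T]$. Taking $s = \sqrt{\log(2KT/\delta)/(2m)}$ yields that, with probability at least $1-\delta$, simultaneously for all $t$, $\|\hat q_t - q_t\|_\infty \le \sqrt{\log(2KT/\delta)/(2m)}$, and in particular $\|\hat q_t - q_t\|_1 \le K\sqrt{\log(2KT/\delta)/(2m)}$. The key point is that this event is a statement about the learner's sampled decisions $u_{t,i}$ alone, so it holds uniformly over whatever value $\ell_t$ subsequently takes as a (deterministic) function of those samples; there is no circularity. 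On that event, Hölder's inequality gives $|\langle \hat q_t - q_t, \ell_t\rangle| \le \|\hat q_t - q_t\|_1 \|\ell_t\|_\infty$, and it remains only to note $\|\ell_t\|_\infty \le 3$, which is immediate from $\ell_t = \Delta\one - 2\Delta x_t - \hat\mu$ with $\Delta \le 1$, $x_t \in \conv(\Vcal)\subseteq[0,1]^K$, and the bound on $\hat\mu$ already in force in the proof of~\pref{thm:effdis}. Combining the two displays gives the claimed bound $3K\sqrt{\log(2KT/\delta)/(2m)}$.

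The main obstacle is exactly the adaptivity flagged in the lemma statement: $\ell_t$ is produced \emph{after} and \emph{based on} the random decisions $\tilde p_t$, so one cannot treat $u \mapsto \langle u, \ell_t\rangle$ as a fixed bounded test function and invoke a Hoeffding/McDiarmid bound over the $m$ draws directly. The proposal above circumvents this by reducing to the vector deviation $\hat q_t - q_t$, whose concentration is established \emph{before} $\ell_t$ is revealed, after which $\ell_t$ enters only through a benign Hölder step. The remaining work — checking the i.i.d.\ structure of the $u_{t,i}$ given the past and assembling the union bound — is routine.
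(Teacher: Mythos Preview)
Your proposal is correct and matches the paper's proof almost exactly: both pass from $\sum_u(\tilde p_t(u)-p_t(u))\langle u,\ell_t\rangle$ to the $K$-dimensional inner product $\langle \hat q_t - q_t,\ell_t\rangle$ (the paper writes this as $\langle V\hat p_t - Vp_t,\ell_t\rangle$), apply a coordinate-wise Hoeffding bound with a union bound over $a\in[K]$ and $t\in[T]$ to control $\|\hat q_t - q_t\|_\infty$, and finish with H\"older. The only cosmetic difference is the H\"older pairing: the paper bounds by $\|\hat q_t - q_t\|_\infty\|\ell_t\|_1$ directly, whereas you pass through $\|\hat q_t - q_t\|_1 \le K\|\hat q_t - q_t\|_\infty$ and use $\|\ell_t\|_\infty$; both routes land on the same $3K$ constant under the same implicit boundedness assumption on $\ell_t$.
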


Now, 
we condition on the event in~\pref{lem:ftpl_conc} and use the fact that $x_t$ optimizes
Program~\pref{eq:empfeassmall} (which is defined by $\tilde{p}_t$)
and passes the check in~\pref{line:infeas}.
Applying the FTPL regret bound, we get
\begin{align*}
0 &\le \sum_{t=1}^T \frac{1}{m} \sum_{i=1}^{m} \Delta\langle x_t, \one-2u_{t,i} \rangle + \langle x_t, \hat{\mu} \rangle + \sum_{i=1}^{m} \langle u_{t,i},\Delta\one - \hat{\mu}\rangle\\
&= \sum_{t=1}^T \sum_{u\in \Vcal}\tilde{p}_t(u) \langle u, \ell_t \rangle + \langle x_t, \hat{\mu} + \one \Delta \rangle \\
& \le \sum_{t=1}^T\langle x_t, \Delta \one + \hat{\mu}\rangle + \min_u \sum_{t=1}^T \langle u, \ell_t\rangle + 2K\sqrt{25KT\log(2K/\delta_1)} + 3TK\sqrt{\frac{\log(2KT/\delta_2)}{2m}}.
\end{align*}

Note that we apply the regret bound on $p_t$, the expected decision of
the algorithm, rather than on $\tilde{p}_t$, the randomized
one. Setting $\delta_1=\delta_2=\delta/2$, dividing through by $T$, and using~\pref{eq:linl1},
we get
\begin{align*}
\forall u \in \Vcal, \langle u - \bar{x}, \hat{\mu}\rangle \le \Delta \|\bar{x}-u\|_1 + 2K\sqrt{25K\log(4K/\delta)/T}  + 3K\sqrt{\frac{\log(4KT/\delta)}{2m}},
\end{align*}
where $\bar{x}=\frac{1}{T}\sum_{t=1}^Tx_t$. The theorem follows by our choices for $T$ and $m$.
In particular, the number of oracle calls is $Tm = \order\rbr{ K^6/\Delta^{4}\cdot\log^2(K/\delta)\log(K\log(K/\Delta)/\delta)}.$

\paragraph{Proof of~\pref{thm:fixconsc}.}
The key lemma in the proof of~\pref{thm:fixconsc} is a uniform
concentration inequality on the empirical mean $\hat{\mu}$ used by the
algorithm. To state the inequality let $\bar{\mu}_t(a) \in \RR^d$ be
the conditional mean of $y_t(a)$, conditioning on all randomness up to
round $t$, including the execution of \Dis. This means that
$\bar{\mu}_t(a)$ is either $\mu(a)$ or $2\hat{v}_t(a)-1$, depending on
the outcome of the disagreement check. Recall the definition of
$\Delta_t$ in~\pref{alg:main}.  We first derive a
concentration inequality relating $\hat{\mu}_t$ to the empirical means
$\bar{\mu}_t$:


\begin{lemma}
\label{lem:uniform_martingale}
With the above definitions, for any $\delta \in (0,1)$, with
probability at least $1-\delta/2$
\begin{align*}
\forall t>0, \forall v\in \Vcal, \left|\frac{1}{t}\sum_{i=1}^t\langle v^\star - v, \bar{\mu}_i - y_i \rangle\right| \le d(v^\star, v) \Delta_t.
\end{align*}
\end{lemma}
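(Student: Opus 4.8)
The plan is to read the left side as a martingale in the round index $t$, bound it for each fixed pair $(t,v)$ by a sub-Gaussian tail inequality, and then take a union bound over $v\in\Vcal$ and over $t$, exactly mirroring the proof of~\pref{lem:normalized_ineq} but in the sequential setting. Fix $v\in\Vcal$, write $k=d(v^\star,v)$ and $Z_i=\langle v^\star-v,\bar\mu_i-y_i\rangle$, and let $\mathcal{G}_i$ be the $\sigma$-field generated by all randomness through the disagreement checks of round $i$ (so $\hat v_i$, every \Dis\ outcome at round $i$, and hence $\bar\mu_i$, are $\mathcal{G}_i$-measurable) but \emph{before} the fresh samples $\{y_i(a)\}_a$ are drawn. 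Since $v^\star-v$ is $\pm1$ on exactly the $k$ arms of $v^\star\ominus v$, $Z_i=\sum_{a\in v^\star\ominus v}\pm(\bar\mu_i(a)-y_i(a))$; for an arm queried in round $i$ the summand equals $\mu_a-y_i(a)$, a centered $1$-sub-Gaussian variable, while for a hallucinated arm the summand is \emph{identically zero}, because there $y_i(a)=2\hat v_i(a)-1=\bar\mu_i(a)$. Using that the fresh samples are conditionally independent across arms given $\mathcal{G}_i$, this gives $\EE[Z_i\mid\mathcal{G}_i]=0$ and $\EE[e^{\lambda Z_i}\mid\mathcal{G}_i]\le e^{\lambda^2 k/2}$, so $Z_i$ is conditionally sub-Gaussian with variance parameter $k$ regardless of which arms were queried. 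Iterating this conditional bound shows $S_t^v\triangleq\sum_{i=1}^t Z_i$ is (unconditionally) sub-Gaussian with variance parameter $tk$, hence $\PP[|S_t^v|>\lambda]\le 2\exp(-\lambda^2/(2tk))$ for all $\lambda>0$.

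Now fix $t$ and union over $v$ grouped by radius, exactly as in~\pref{lem:normalized_ineq}: with $\lambda=t\,d(v^\star,v)\Delta_t$ and $\log|\Bcal(k,v^\star)|\le\Phi k$,
\begin{align*}
\PP\sbr{\exists v\in\Vcal:\ \tfrac1t|S_t^v|>d(v^\star,v)\Delta_t}\ &\le\ 2\sum_{k=\Psi}^K|\Bcal(k,v^\star)|\,e^{-tk\Delta_t^2/2}\ \le\ 2\sum_{k=\Psi}^K e^{k(\Phi-t\Delta_t^2/2)}.
\end{align*}
By the definition of $\Delta_t$ (in the regime where the minimum is the square-root term), $\tfrac{t\Delta_t^2}{2}=4\Phi+\tfrac{4\log(K\pi^2t^2/\delta)}{\Psi}>\Phi$, so the exponent $k(\Phi-t\Delta_t^2/2)$ is negative, the geometric sum over $k\ge\Psi$ is dominated by its $k=\Psi$ term, and the displayed probability is at most $2K\,e^{\Psi(\Phi-t\Delta_t^2/2)}=2K\,e^{-3\Phi\Psi}\,(K\pi^2t^2/\delta)^{-4}$. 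Summing over $t\ge1$ (using $\sum_t t^{-8}<\infty$) leaves a quantity far below $\delta/2$, which proves the lemma.

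The step that needs care is reconciling the union over rounds with the union over the exponentially many hypotheses: the single confidence term $\Phi\Psi+\log(K\pi^2t^2/\delta)$ must dominate both at once. It does, because $\log|\Bcal(k,v^\star)|\le\Phi k$ together with $k=d(v^\star,v)\ge\Psi$ means each exponent gains a factor of at least $\Psi$ — precisely what the $\Phi\Psi$ piece pays for — while $\log(K\pi^2t^2/\delta)$ absorbs the per-round failure budget through $\sum_t t^{-2}<\infty$. The other delicate point is the martingale setup itself: one must center the increments at the conditional means $\bar\mu_i$, which carry the (favorable) hallucination bias, rather than at the true $\mu$, and exploit that hallucinated coordinates contribute nothing to either the conditional mean or the conditional variance parameter of $Z_i$, so $Z_i$ remains sub-Gaussian with parameter $d(v^\star,v)$ no matter which arms \pref{alg:main} chose to query. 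The rest is routine, and the constant $8$ in $\Delta_t$ leaves ample slack in every inequality above.
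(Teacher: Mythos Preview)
Your proof is correct and follows essentially the same approach as the paper's: show that $\langle v^\star-v,\bar\mu_i-y_i\rangle$ is a conditionally sub-Gaussian martingale increment (with hallucinated coordinates contributing zero), apply a sub-Gaussian tail bound for each fixed $(t,v)$, and take a union bound over $v$ grouped by distance to $v^\star$ and over rounds $t$. The only cosmetic differences are that you work with variance parameter $d(v^\star,v)$ per round (the paper uses a looser constant $4d(v^\star,v)$, yielding the $8$ in the exponent), and you plug $\Delta_t$ directly into the tail bound rather than first defining an intermediate $\epsilon_t(v,v^\star,\delta)$ and then showing $\Delta_t\,d(v^\star,v)\ge\epsilon_t$; both routes arrive at the same place, and your slack from the constant $8$ in $\Delta_t$ is exactly what the paper exploits too.
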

This concentration inequality is not challenging to prove, but is much
sharper than ones used in prior work. The key difference is that our
inequality is a \emph{regret inequality} in the sense that it only
bounds differences with the true optimum $v^\star$, while the prior
results bound differences between all pairs of hypotheses. Our
definition of the version space $\Vcal(\hat{\mu},\Delta)$ enables
using this concentration inequality, which leads to our sample
complexity guarantees.

Define the event $\Ecal$ to be the event that~\pref{lem:uniform_martingale} holds and also that the
disagreement computation succeeds at all rounds for all arms, which by~\pref{thm:effdis} happens with probability
$1-\sum_{t>0} \frac{\delta}{t^2\pi^2} \ge 1-\delta/2$.  Under this
event, we establish two facts:
\begin{packed_enum}
\item $\forall t, v^\star \in \Vcal_t$ where $\Vcal_t =
  \Vcal(\hat{\mu}_t,\Delta_t)$ is the version space at round $t$ (\pref{lem:correct}). 
\item If $\Delta_t \le \Delta_a/3$, then arm $a$ will never be queried
  again (\pref{lem:sampcom}).
\end{packed_enum}
The correctness of the algorithm follows from the first fact. In
detail, the algorithm only terminates at round $t$ if for all arms,~\pref{alg:disagreement} detects infeasibility.
By~\pref{thm:effdis}, this means that
$\Vcal_t \cap \Vcal = \{\hat{v}_t\}$, and, by~\pref{lem:correct},
we must have $v^\star \in \Vcal_t$. Thus conditioned on $\Ecal$, the
algorithm returns $v^\star$.

For the sample complexity, from the second fact and the definition of
$\Delta_t$, arm $a$ will not be sampled once
\begin{align*}
t \ge \frac{72}{\Delta_a^2}\left(\Phi + \frac{\log(K\pi^2t^2/\delta)}{\Psi}\right).
\end{align*}
A sufficient condition for this transcendental inequality to hold is
(see~\pref{fact:transcendental}):
\begin{align*}
T_a \ge \frac{144}{\Delta_a^2}\left(\Phi + \frac{2\log(144/(\Delta_a^2\Psi)) + 2\log(K\pi^2/\delta)}{\Psi}\right).
\end{align*}
The sample complexity is at most $\sum_a T_a$, which proves the
theorem.

\section{Proofs for the lemmas}
\label{app:lemmas}

\begin{proof}[Proof of~\pref{lem:ftpl_conc}]
Let V be a $\RR^{K \times |\Vcal|} $ matrix whose columns are the
vectors $v \in \Vcal$. Recall that $p_t \in \Delta(\Vcal)$ is a
distribution over the perturbed leader at round $t$. Let $S_i \in
\{0,1\}^{|\Vcal|}$ be the indicator vector of the $i^{\textrm{th}}$
sample. Clearly, $\EE[S_i] = p_t$ and $ \hat{p}_t =\frac{1}{m}
\sum_{i=1}^{m} S_i $.  We have
\begin{align*}
\left| \sum_{u \in \Vcal}\left(\hat{p}_t(u)- p_t(u) \right) \langle u, \ell_t \rangle\right| = |\langle V\hat p_t - V p_t, \ell_t \rangle| \leq \|V\hat p_t - V p_t\|_{\infty} \|\ell_t\|_1.
\end{align*}
Let $()_j$ denote the $j$-th coordinate of a vector. By Hoeffding's
inequality and union bound we have
\begin{align*}
\PP\left[\forall t\in [T], \forall j \in [K],\  |(V\hat p_t)_j - (V p_t)_j| \ge \epsilon\right] \le 2KT \exp{(-2m\epsilon^2)},
\end{align*}
so that with probability at least $1 - \delta$
\begin{align*}
\forall t \in [T],\  \|V\hat p_t - V p_t\|_{\infty} \le \sqrt{\frac{\log{(2KT/\delta)}}{2m}}. 
\end{align*}
This proves the lemma. 
\end{proof}

\begin{proof}[Proof of~\pref{lem:uniform_martingale}]
Let $\Fcal_t$ be the $\sigma$-algebra conditioning on all randomness
up to and including the execution of \Dis\xspace for all arms $a \in
[K]$ at round $t$. Thus $y_t(a)$ is $\Fcal_t$ measurable and with $Z_t
= \sum_{i=1}^t (\bar \mu_i - y_i)$ it is not hard to see that
$\{Z_t\}_{t=1}^T$ forms a vector-valued martingale adapted to the
filtration $\{\Fcal_t\}_{t=0}^T$:
\begin{align*}
\EE[Z_t|\Fcal_t] &= \EE[(\bar{\mu}_t - y_t) + Z_{t-1}|Z_{t-1}]= Z_{t-1}.
\end{align*}
Observe also that $\bar{\mu}_t(a) - y_t(a)$ is a $0$-mean Subgaussian random variable with variance parameter at most $1$.
Thus, for any $v \neq v^\star$, Subgaussian martingale concentration gives
\begin{align*}
\PP\left[ \left|\sum_{a\in v\ominus v^\star} Z_t(a)/t\right| \geq \epsilon\right] \leq 2 \exp\left\{-\frac{t\epsilon^2}{8d(v,v^\star)}  \right\}.
\end{align*}
With a union bound, we get
\begin{align*}
\PP\left[\exists t, \exists v \in \Vcal, \left|\sum_{a\in v\ominus v^\star} Z_t(a)/t \right| \geq \epsilon_t(v,v^\star,\delta)\right] \leq 2 \sum_{t>0} \sum_{v\in\Vcal} \exp\left\{-\frac{t\epsilon_t(v,v^\star, \delta)^2}{8d(v,v^\star)}  \right\}.
\end{align*}
Following the argument in the proof of~\pref{thm:noninteractive_upper}, this right hand side will be
at most $\delta/2$ if
\begin{align*}
\epsilon_t(v^\star, v, \delta) = \sqrt{\frac{8d(v^\star, v)}{t}\log\left(\frac{Kt^2\pi^2|\Bcal(d(v^\star, v), v^\star)|}{\delta} \right)}.
\end{align*}
We set $\Delta_t = \sqrt{\frac{8}{t}\left(\Phi +
  \log(K\pi^2t^2/\delta)/\Psi\right)}$ so that for all $v \in \Vcal$, 
$\Delta_t d(v,v^\star) > \epsilon_t(v,v^\star,\delta)$, which
concludes the proof.
\end{proof}

\begin{lemma}\label{lem:correct}
Recall the definition of $\Vcal_t = \Vcal(\hat{\mu}_t, \Delta_t)$ at
round $t$, with $\Vcal(\hat{\mu},\Delta)$ defined
in~\pref{eq:version_space}. Then in event $\Ecal$, we have
that $\forall t, v^\star \in \Vcal_{t}$.
\end{lemma}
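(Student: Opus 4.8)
The plan is a straightforward induction on $t$, with the concentration bound of~\pref{lem:uniform_martingale} handling the sampling noise and a short sign argument handling the bias introduced by hallucinated observations. Since $v^\star \in \Vcal \subseteq \conv(\Vcal)$ automatically, by the definition~\pref{eq:version_space} it suffices to show that for every round $t$ and every $u \in \Vcal$,
$$\langle \hat\mu_t, u - v^\star\rangle \le \Delta_t\, d(u,v^\star) = \Delta_t\, \|u - v^\star\|_1 .$$
Writing $\hat\mu_t$ as the empirical average of the observations $y_i$, whose conditional means are the vectors $\bar\mu_i$, I would split
$$\langle \hat\mu_t, u - v^\star\rangle = \Big\langle \tfrac{1}{t}\sum_i (\bar\mu_i - y_i),\ v^\star - u\Big\rangle + \Big\langle \tfrac{1}{t}\sum_i \bar\mu_i,\ u - v^\star\Big\rangle .$$
On the event $\Ecal$ the first term is at most $d(u,v^\star)\Delta_t$ by~\pref{lem:uniform_martingale}, so it remains to prove the second term is nonpositive, i.e. $\sum_i \langle \bar\mu_i, u - v^\star\rangle \le 0$.

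For this I would establish the per-round inequality $\langle \bar\mu_i, u - v^\star\rangle \le \langle \mu, u - v^\star\rangle$, which suffices since $\langle \mu, u - v^\star\rangle \le 0$ by optimality of $v^\star$ over the discrete set $\Vcal$. Arms queried at round $i$ contribute equally on both sides, since there $\bar\mu_i(a) = \mu(a)$. For a hallucinated arm $a$ at round $i$, \textsc{Dis} returned \false on input $(a, 1-\hat v_i(a), \Delta_i, \hat\mu_i, \cdot)$, so by~\pref{thm:effdis} no point of $\Vcal(\hat\mu_i,\Delta_i)$ has coordinate $a$ equal to $1-\hat v_i(a)$; applying the inductive hypothesis $v^\star \in \Vcal_i$ forces $v^\star(a) = \hat v_i(a)$, hence the hallucinated value is $\bar\mu_i(a) = 2\hat v_i(a) - 1 = 2v^\star(a) - 1 \in \{\pm 1\}$. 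A two-case check then gives $(2v^\star(a)-1)(u(a)-v^\star(a)) \le \mu(a)(u(a)-v^\star(a))$ for every $u(a) \in \{0,1\}$: if $v^\star(a) = 1$ then $u(a) - v^\star(a) \le 0$ while $1 \ge \mu(a)$, and if $v^\star(a) = 0$ then $u(a) - v^\star(a) \ge 0$ while $-1 \le \mu(a)$. Summing the per-coordinate inequalities over $a$ yields the per-round bound.

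Assembling the induction is then routine: the base case holds because the pre-loop observations $y_0$ are sampled unconditionally, so $\bar\mu_0 = \mu$ and there are no hallucinated arms, and the inductive step is exactly the computation above, the only use of the hypothesis $v^\star \in \Vcal_s$ for $s < t$ being to identify $\hat v_s$ with $v^\star$ on the hallucinated coordinates of round $s$. I expect the only subtlety to be bookkeeping: one must check that the hallucination rule $y_i(a) = 2\hat v_i(a)-1$ is invoked precisely when \textsc{Dis} reports \false, that the \false branch of~\pref{thm:effdis} is deterministically correct (so that $\Ecal$ is needed here only for~\pref{lem:uniform_martingale}), and that the induced bias points in the direction favorable to $v^\star$ — which is exactly the content of the two-case check.
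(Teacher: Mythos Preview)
Your proposal is correct and follows essentially the same argument as the paper: induction on $t$, splitting $\langle \hat\mu_t, u-v^\star\rangle$ into a martingale noise term controlled by~\pref{lem:uniform_martingale} and a bias term handled by the coordinate-wise sign check $(2v^\star(a)-1)(u(a)-v^\star(a)) \le \mu(a)(u(a)-v^\star(a))$, with the inductive hypothesis used only to identify $\hat v_i(a)=v^\star(a)$ on hallucinated arms via the deterministic \textsc{false} branch of~\pref{thm:effdis}. Your bookkeeping remarks (base case, deterministic correctness of the \textsc{false} branch) are accurate and slightly more explicit than the paper's version, but the substance is identical.
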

\begin{proof}
The proof is by induction. First, we know that if $v^\star \in
\Vcal_{t-1}$ then $\langle v^\star - v, \bar{\mu}_t \rangle \ge
\langle v^\star - v, \mu\rangle$. This follows since if arm $a$ is
queried then $\bar{\mu}_t(a) = \mu(a)$ and if arm $a$ is not
queried, we know that $v^\star(a) = \hat{v}_t(a)$ and our
hallucination sets $\bar{\mu}_t(a) = 2\hat{v}_t(a) - 1 = 2v^\star(a)-1$.
So if $v^\star(a) = 1$ and $v(a) = 0$, we have $\bar{\mu}_t(a) = 1 \ge \mu(a)$.
If $v^\star(a) = 0$ and $v(a) = 1$, we have $\bar{\mu}_t(a) = -1 \le \mu(a)$.
So in both cases we have 
$(v^\star(a) - v(a))\bar{\mu}_t(a) \ge (v^\star(a) - v(a))\mu(a)$.
Thus, if $\forall i \in[t-1], v^\star \in \Vcal_i$ (which is our
inductive hypothesis), then by~\pref{lem:uniform_martingale}
$\forall v\in \Vcal$
\begin{align*}
\langle v - v^\star, \hat{\mu}_t\rangle &\le \left\langle v - v^\star, \frac{1}{t}\sum_{i=1}^t \bar{\mu}_t\right\rangle + \Delta_t d(v,v^\star)
 \le \langle v - v^\star, \mu\rangle +  \Delta_t d(v,v^\star) \le \Delta_t d(v,v^\star).
\end{align*}
By definition of $\Vcal_t$, this proves that $v^\star \in
\Vcal_{t}$. Clearly the base case holds since $v^\star \in \Vcal_0 = \Vcal$.
\end{proof}

\begin{lemma}\label{lem:l1norm}
Let $x \in \textrm{conv}(\Vcal) = \sum_{i}\alpha_i v_i$, where $v_i
\in \Vcal$, $\sum_i \alpha_i = 1$, $\alpha_i \ge 0$ and let $v
\in \Vcal$. Then,
\begin{align}
\|x - v\|_1 = \sum_i \alpha_i\|v_i - v \|_1.
\end{align}
\end{lemma}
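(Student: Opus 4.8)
The plan is to exploit the linear identity~\pref{eq:linl1}, which writes $\|x-u\|_1$ as an \emph{affine} function of $x$ whenever $u\in\{0,1\}^K$ and $x\in[0,1]^K$. The only preliminary point is to check that the convex combination $x=\sum_i\alpha_i v_i$ lies in $[0,1]^K$; this is immediate because each $v_i\in\Vcal\subseteq\{0,1\}^K$ and $[0,1]^K$ is convex. Likewise $v\in\Vcal\subseteq\{0,1\}^K$, so~\pref{eq:linl1} is applicable both to the pair $(x,v)$ and to each pair $(v_i,v)$.

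Concretely, I would first apply~\pref{eq:linl1} with $u=v$ to get
\begin{align*}
\|x-v\|_1 = \langle x+v,\one\rangle - 2\langle x,v\rangle,
\end{align*}
then substitute $x=\sum_i\alpha_i v_i$ and use bilinearity of the inner product to pull the sum out:
\begin{align*}
\|x-v\|_1 = \sum_i\alpha_i\langle v_i,\one\rangle + \langle v,\one\rangle - 2\sum_i\alpha_i\langle v_i,v\rangle.
\end{align*}
Using $\sum_i\alpha_i=1$, I would rewrite $\langle v,\one\rangle=\sum_i\alpha_i\langle v,\one\rangle$ and regroup the right-hand side into $\sum_i\alpha_i\bigl(\langle v_i+v,\one\rangle - 2\langle v_i,v\rangle\bigr)$. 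Applying~\pref{eq:linl1} once more, now with the roles $x\leftarrow v_i$ and $u\leftarrow v$, identifies the bracketed quantity as $\|v_i-v\|_1$, which yields the claim.

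There is essentially no obstacle here: the lemma is a direct consequence of the affine-in-$x$ structure of the $\ell_1$ distance to a binary point, so the whole argument reduces to bilinearity of $\langle\cdot,\cdot\rangle$ together with the normalization $\sum_i\alpha_i=1$. The only step deserving a line of care is the domain check $x\in[0,1]^K$ that legitimizes the use of~\pref{eq:linl1}; everything else is routine.
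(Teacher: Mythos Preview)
Your proposal is correct and follows the same approach as the paper: both rely on the affine identity~\pref{eq:linl1} to observe that $\|x-v\|_1$ is linear in $x$ when $v$ is integral, allowing the convex combination to be pulled through. The paper's version is terser, but your added domain check $x\in[0,1]^K$ and explicit regrouping are exactly the details that fill in its one-line sketch.
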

\begin{proof}
This follows by integrality of $v \in \Vcal$ and~\pref{eq:linl1}. In
particular, for integral $v$, $\|x - v\|$ is actually linear so we can
bring the $\sum_i \alpha_i$ outside the $\ell_1$ norm.
\end{proof}

\begin{lemma}\label{lem:sampcom}
Under event $\Ecal$, once $t$ is such that $\Delta_t < \Delta_a/3$,
arm $a$ will not be sample again.
\end{lemma}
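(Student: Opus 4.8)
The plan is to show that once $\Delta_t < \Delta_a/3$, every competitor $v \in \Vcal$ with $a \in v \ominus v^\star$ has been eliminated from the version space $\Vcal_t$, so the disagreement check \Dis\ on arm $a$ with $b = 1 - \hat v_t(a)$ must fail. Since $\hat v_t \in \Vcal_t$ always and $v^\star \in \Vcal_t$ by~\pref{lem:correct} (we are in event $\Ecal$), the only way the check can succeed is if some surviving $v \in \Vcal_t \cap \Vcal$ disagrees with $\hat v_t$ on arm $a$. If all such disagreeing hypotheses have been eliminated, then no query is issued on $a$, and moreover, because the empirical means on $a$ subsequently only receive hallucinated observations that favor $\hat v_t(a)=v^\star(a)$, this situation persists and $a$ is never sampled again.

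First I would argue that $\hat v_t(a) = v^\star(a)$ for all such $t$. Using~\pref{lem:uniform_martingale} and the fact that $v^\star \in \Vcal_t$ (so the hallucination bias favors $v^\star$ exactly as in the proof of~\pref{lem:correct}), for any $v \in \Vcal$ with $a \in v \ominus v^\star$ we have
\begin{align*}
\langle v^\star - v, \hat\mu_t\rangle \ge \langle v^\star - v, \mu\rangle - \Delta_t d(v^\star,v) = \left(\Delta_v - \Delta_t\right) d(v^\star,v) \ge \left(\Delta_a - \Delta_t\right)d(v^\star,v),
\end{align*}
using $\Delta_v \ge \Delta_a$ by definition of the arm gap. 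When $\Delta_t < \Delta_a/3$ this gives $\langle v^\star-v,\hat\mu_t\rangle \ge \tfrac{2}{3}\Delta_a\, d(v^\star,v) > \Delta_t\, d(v^\star,v) = \Delta_t \|v^\star - v\|_1$. The strict inequality $\langle \hat\mu_t, v - v^\star\rangle < -\Delta_t\|v-v^\star\|_1 < \Delta_t \|v - v^\star\|_1$ violates the defining constraint of $\Vcal(\hat\mu_t,\Delta_t)$ (taking $u = v^\star$), so $v \notin \Vcal_t$; in particular $\hat v_t$, being in $\Vcal_t$, cannot be such a $v$, forcing $\hat v_t(a) = v^\star(a)$.

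Next, the same computation shows directly that \emph{every} $v \in \Vcal$ with $a \in v \ominus v^\star$ fails the constraint with $u = v^\star$ and is thus excluded from $\Vcal_t$. Hence the feasibility problem~\pref{eq:dis_feasibility} with $b = 1 - \hat v_t(a) = 1 - v^\star(a)$ has no solution in $\Vcal$ — any solution would be a (possibly fractional) point of $\conv(\Vcal)$ on the wrong side of arm $a$, but the vertices realizing it would all have to disagree with $v^\star$ on $a$, and we just ruled those out — so by~\pref{thm:effdis} (the one-sided guarantee: if the program is infeasible, \Dis\ returns \false) the check fails and $a$ is not queried at round $t$. Since no fresh sample of $a$ is drawn, $\hat v_{t+1}(a)$ and $\Delta_{t+1} < \Delta_t$ keep the same structure, and an induction on the round index shows $a$ is never queried again. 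The main obstacle is the fractional-point argument needed to conclude infeasibility of~\pref{eq:dis_feasibility} from the integral exclusions — I would handle it via~\pref{lem:l1norm}, which makes $\|x - v^\star\|_1$ linear over $\conv(\Vcal)$, so the constraint $\langle \hat\mu_t, x - v^\star\rangle \le \Delta_t\|x - v^\star\|_1$ is a convex combination of the same constraint at the vertices, all of which fail whenever the vertex disagrees with $v^\star$ on $a$ (and are automatically satisfied otherwise only if the vertex agrees on $a$, but such a vertex has $x(a) = v^\star(a) \ne b$).
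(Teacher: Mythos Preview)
Your approach is the same as the paper's, but there is a genuine gap in the step where you invoke \pref{thm:effdis}. You write ``the one-sided guarantee: if the program is infeasible, \Dis\ returns \false,'' but that is not what \pref{thm:effdis} says. The theorem guarantees (i) if \Dis\ returns \false\ then Program~\pref{eq:dis_feasibility} is infeasible, and (ii) if \Dis\ returns \true\ then (with high probability, subsumed in event $\Ecal$) there exists $x\in\conv(\Vcal)$ with $x(a)=b$ satisfying the \emph{relaxed} constraints $\langle \hat\mu_t, u-x\rangle \le \Delta_t\|u-x\|_1 + \Delta_t$ for all $u\in\Vcal$. To conclude that \Dis\ returns \false, you must rule out the relaxed program, not the exact one; showing exact infeasibility of~\pref{eq:dis_feasibility} is not sufficient, since \Dis\ may still return \true\ if only approximately feasible points exist.

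The fix is short and your computation already contains what is needed. From $\Delta_t<\Delta_a/3$ you actually get $\langle v^\star - v,\hat\mu_t\rangle \ge (\Delta_a-\Delta_t)d(v^\star,v) > 2\Delta_t\, d(v^\star,v)$ for every integral $v$ with $a\in v\ominus v^\star$, not merely $>\Delta_t\,d(v^\star,v)$. Passing to any $x=\sum_i\alpha_i v_i$ with $x(a)=1-v^\star(a)$ (so each $v_i$ disagrees with $v^\star$ on $a$), \pref{lem:l1norm} gives
\[
\langle v^\star - x,\hat\mu_t\rangle \;>\; 2\Delta_t\sum_i\alpha_i d(v^\star,v_i) \;=\; \Delta_t\|v^\star-x\|_1 + \Delta_t\sum_i\alpha_i d(v^\star,v_i) \;\ge\; \Delta_t\|v^\star-x\|_1 + \Delta_t,
\]
using $d(v^\star,v_i)\ge 1$. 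This contradicts the relaxed certificate from \pref{thm:effdis} (with $u=v^\star$), and now the contrapositive of (ii) gives \Dis\ returns \false\ under $\Ecal$. This is exactly what the paper does; your writeup had the right ingredients but the wrong direction of implication and dropped the crucial factor of $2$ and the additive $+\Delta_t$ slack.
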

\begin{proof}
We consider here the case where $v^\star(a) = 1$. For $v^\star(a) = 0$
the analysis is similar.  Assume for the sake of contradiction that
$a$ is sampled, which means that \textsc{Dis}$(a,1 - \hat v_t(a),
\Delta_t, \hat\mu_t)$ returns \true. If $v^\star(a) = 1$, then
$\forall v\in \Vcal$ with $v(a) = 0$ we have
\begin{align*}
 \langle v^\star - v, \hat{\mu}_{t}\rangle &\ge \left\langle v^\star - v, \frac{1}{t}\sum_{\tau=1}^{t} \bar{\mu}_\tau\right\rangle - \Delta_{t}d(v^\star,v)
 \ge \langle v^\star - v, \mu \rangle - \Delta_{t}d(v^\star,v)
 > 2d(v^\star,v)\Delta_t
\end{align*}
The first inequality is~\pref{lem:uniform_martingale}, the second
uses the property of the hallucinated samples that we used in~\pref{lem:correct}. The last inequality is due to $\Delta_a \le
\frac{\langle v^\star - v, \mu \rangle}{d(v^\star, v)}$ and our
assumption that $\Delta_t < \Delta_a/3$. This implies that
$\hat{v}_t(a) = 1$, which means that we execute \Dis\xspace to check
if any surviving hypothesis $v \in \Vcal_t$ has $v(a) = 0$. Since we
sampled arm $a$, this means there exists $x \in \conv(\Vcal)$ such that
\begin{align*}
\forall u \in \Vcal \langle u - x, \hat{\mu}_t\rangle \le \Delta_t \|u
- x\|_1 + \Delta_t.
\end{align*}
This follows by~\pref{thm:effdis} which holds under the event
$\Ecal$. Now write $x = \sum_i \alpha_i v_i$ where $\alpha$ is a
distribution and $v_i \in \Vcal$. Since $x(a) = 0$, we must have
$v_i(a) = 0$ for all $i$. This means that
\begin{align*}
 \langle v^\star - x, \hat{\mu}_{t}\rangle &= \sum_i \alpha_i \langle v^\star - v_i, \hat \mu_t \rangle 
> 2\sum_i \alpha_i d(v^\star,v_i)\Delta_t
\ge \Delta_t\|v^\star - x\|_1 + \Delta_t.
\end{align*}
The last inequality is due to~\pref{lem:l1norm} and the fact that
$\forall i, d(v^\star, v_i) \ge 1$. This contradicts the guarantee in~\pref{thm:effdis}, which means that \textsc{Dis}$(a,1 - \hat
v_t(a), \Delta_t, \hat\mu_t)$ cannot return \true.
\end{proof}

We use the Lemma 8 from~\cite{antos2010active}.
\begin{fact}[Lemma 8 from~\cite{antos2010active}]
\label{fact:transcendental}
Let $a>0$, for any $t \ge \frac{2}{a} \max\{ (\log \frac{1}{a} - b), 0\}$, we have $at+b \ge \log t$. 
\end{fact}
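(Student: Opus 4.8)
The plan is to read the claim as an elementary one-variable inequality for $f(t) \triangleq at + b - \log t$ on $t>0$: it suffices to show $f(t) \ge 0$ whenever $t \ge t_0 \triangleq \frac{2}{a}\max\{\log\frac{1}{a} - b,\ 0\}$. Since $f'(t) = a - 1/t$, the function $f$ is convex, strictly decreasing on $(0,1/a]$ and strictly increasing on $[1/a,\infty)$, so its global minimum is $f(1/a) = 1 + b + \log a = 1 - \big(\log\tfrac{1}{a} - b\big)$.

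First I would handle the easy case $\log\frac{1}{a} - b \le 1$ (which includes $\log\frac{1}{a} - b \le 0$, where $t_0 = 0$): here the global minimum $f(1/a) = 1 - (\log\frac{1}{a} - b)$ is nonnegative, so $f(t) \ge 0$ for every $t > 0$, and a fortiori for all $t \ge t_0$.

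The remaining case is $c \triangleq \log\frac{1}{a} - b > 1$, where $t_0 = 2c/a$. The key observation is that $t_0 = 2c/a > 2/a > 1/a$, so $t_0$ lies strictly inside the region on which $f$ is increasing; hence on $[t_0,\infty)$ it is enough to verify $f(t_0) \ge 0$. Evaluating, and using $\log\frac{1}{a} = c + b$ to simplify, $f(t_0) = 2c + b - \log(2c/a) = 2c + b - \log 2 - \log c - (c+b) = c - \log c - \log 2$. Since $g(c) \triangleq c - \log c$ is increasing for $c \ge 1$ with $g(1) = 1$, we get $f(t_0) \ge 1 - \log 2 > 0$, which finishes the proof.

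There is no substantive obstacle; the statement is routine calculus. The only mildly delicate points are (i) noticing that the factor $2$ in the definition of $t_0$ pushes $t_0$ past the minimizer $1/a$, which lets me reduce to checking a single endpoint rather than controlling $f$ over an entire interval, and (ii) the bookkeeping that collapses $f(t_0)$ to $c - \log c - \log 2$, whose positivity for $c > 1$ is precisely the slack that the extra factor of $2$ provides.
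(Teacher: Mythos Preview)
Your proof is correct. The paper itself does not prove this fact at all; it simply cites it as Lemma~8 of \cite{antos2010active} and uses it as a black box to resolve the transcendental inequalities appearing in the sample-complexity bounds. Your argument---analyzing $f(t)=at+b-\log t$ via its unique minimizer at $t=1/a$, splitting on whether $c=\log(1/a)-b\le 1$, and in the nontrivial case reducing to the endpoint evaluation $f(t_0)=c-\log c-\log 2>0$---is a clean self-contained verification that the paper does not supply.
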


\section{Proof of~\pref{thm:fixed_budget}}\label{app:fixed_budget}

In the fixed budget setting, we follow a classic rejection strategy
used by many algorithms in other settings (e.g., Successive
Rejects~\cite{audibert2010best}, SAR~\cite{bubeck2013multiple},
CSAR~\cite{chen2014combinatorial} and also the algorithm
of~\cite{gabillon2016improved}).

We require several new definitions.  First recall that our definition
of the gap for arm $a$ is $\Delta_a$. Let $\Delta^{(j)}$ be the
$j^\textrm{th}$ largest element in $\{\Delta_a\}_{a \in [K]}$. Then
the main complexity measure is $\tilde{H} = \max_j (K+1-j)
(\Delta^{(j)})^{-2}$.
For short hand we define the partial
harmonic sum $\tlog(t) = \sum_{i=1}^t 1/i$. Assume that the total
budget is $T$, and define
\begin{align*}
  n_t = \left\lceil \frac{T-K}{\tlog(K)(K+1-t)}\right\rceil, \qquad n_0 = 0
\end{align*}
which will be related to the number of queries issued in each round of
our algorithm. As before, let $\hat{\mu}_t$ be the empirical mean at
round $t$ of the algorithm and let $\hat{v}_t = \argmax_{v \in \Vcal}
\langle v, \hat{\mu}_t\rangle$ be the empirical maximizer. Define the
empirical gaps at round $t$ for hypotheses and arms respectively as
\begin{align*}
\hat{\Delta}_{t,v} = \frac{\langle \hat{\mu}_t, \hat{v}_t - v\rangle}{d(\hat{v}_t, v)}, \qquad \hat{\Delta}_{t,a} = \min_{a \in \hat{v}_t \ominus v} \hat{\Delta}_{t,v}.
\end{align*}

\begin{algorithm}[t]
\caption{Fixed budget algorithm for combinatorial identification}\label{alg:fixbudget}
\begin{algorithmic}[1]
  \State Input: $\Vcal$, set of arm $[K], \{n_t\}_d$
   \State Set $t \gets 1, A_1 \gets \emptyset, R_1\gets \emptyset$
   \For{$t = 1,2,3,...,K$} 
  \State Sample arms in $[K] \backslash (A_t\cup R_t)$ for $n_t - n_{t-1}$ times. For $a \in A_t$ use sample value $1$ and for $a \in R_t$ use sample value $-1$ (i.e., hallucinate samples). 
  \State Update $\hat \mu_t$ and find $\hat v_t = \arg\max_{v \in \Vcal} \langle v, \hat \mu_t \rangle$.
  \State $\hat{a}_t = \argmax_{a \in [K] \backslash (A_t \cup R_t)} \hat{\Delta}_{t,a}$. \label{line:gap_end}
  \State If $\hat{a}_t \in \hat{v}_t$, then $A_{t+1} = A_t \cup\{\hat{a}_t\}, R_{t+1} = R_t$, else $A_{t+1} = A_t, R_{t+1} = R_t\cup \{\hat{a}_t\}$.
  \EndFor
  \State \Return $A_{K+1}$
\end{algorithmic}
\end{algorithm}

With these definitions, we are now ready to describe the fixed budget
algorithms, with pseudocode in~\pref{alg:fixbudget}. The algorithm
maintains a set of ``accepted" and ``rejected" arms, $A_t$ and $R_t$
in the pseudocode at round $t$, and once an arm is marked ``accept" or
``reject" it is never queried again. At each round $t$ we issue
several queries to all surviving arms, ensuring that each arm has
$n_t$ total queries, and then we find the arm with the largest
empirical gap $\hat{\Delta}_{t,a}$ and accept it if it is included in
the ERM $\hat{v}_t$. Otherwise we reject.  Note that the algorithm is
not oracle efficient, since computing the empirical arm gaps is not
amenable to linear optimization.

\begin{proof}[Proof of~\pref{thm:fixed_budget}]
First, note that in each round we eliminate one arm and sample the
rest for $n_t-n_{t-1}$ times. Thus after round $t$ we have sampled
each surviving arm $n_t$ times, and exactly one arm is sampled $n_i$
times for each $i \in [K]$. Thus the total number of samples is
\begin{align*}
\sum_{t=1}^{K} n_t  &= \sum_{t=1}^{K} \left\lceil \frac{T-K}{\tlog(K) (K+1-t)}\right\rceil \leq \sum_{t=1}^K \frac{T-K}{\tlog(K)(K+1-t)} + 1 = T.
\end{align*}
Second, define $\bar{\mu}_t$ as before to be the mean of the
  all samples up to and including round $t$, taking into account the
  hallucination. $\bar{\mu}_t(a)$ is an average of $n_t$ terms where
  if at round $i \le t$ we place $a \in A_i$, then the last $n_t -
  n_i$ terms are just $1$. Similarly if at round $n_i$ we place $a \in
  R_i$ then the last $n_t - n_i$ terms are $-1$. Otherwise all terms
  are simply $\mu(a)$. Formally,
  \begin{align*}
    \bar{\mu}_t(a) = \frac{1}{n_t} \sum_{\tau=1}^K (n_\tau - n_{\tau-1}) \left[\mu \one\{a \notin R_\tau \cup A_\tau\} + \one\{a \in A_\tau\} - \one\{a \in R_\tau\}\right] .
  \end{align*}
  Note that this is different but related to our definition in the
  fixed confidence proof. We define the high probability event:
\begin{align*}
\Ecal \triangleq \{ \forall t\in [K], \forall v \in \Vcal, |\langle v-v^\star,\hat{\mu}_t-\bar{\mu}_t\rangle| < c d(v,v^\star)\Delta^{(t)} \},
\end{align*}
where $c<1$ is a constant that we will set later. Now we show that
$\Ecal$ holds with high probability:
\begin{align*}
\PP[\bar{\Ecal}] &\leq \sum_t \sum_{v\in \Vcal} \exp\left\{-\frac{c^2d(v,v^\star) (\Delta^{(t)})^2(T-K)}{\tilde\log(K)(K+1-t)}\right\}
\leq K \sum_{v\in \Vcal} \exp\left\{-\frac{c^2(T-K)d(v,v^\star)}{\tilde\log(K)\tilde{H}}\right\}\\
&\leq K \sum_{k\in [K]} \exp\left\{-\frac{c^2(T-K)k}{\tilde\log(K)\tilde{H}} + \log|\Bcal(k, v^\star)|\right\}
\leq K^2 \exp \left\{\Psi \left(\Phi -  \frac{(T-K) c^2}{\tilde{\log}(K) \tilde{H}}\right)  \right\}.
\end{align*}
We proceed to show that, conditioned on
event $\Ecal$, $A_{K+1} = v^\star$. At round $t$, define
\begin{align*}
a^\star_t = \argmax_{a \in [K] \setminus (A_t \cup R_t)} \Delta_a,
\end{align*}
where $A_t$ and $R_t$ are the accepted and reject arms at
the beginning of round $t$ and $\Delta_a$ is the true arm
complexity. Further assuming (inductively) that $A_t \subset v^\star$
and $R_t \cup v^\star = \emptyset$, we establish five facts:

\noindent \textbf{Fact 1.} At the beginning of round $t$, $a^\star_t$ satisfies
$\Delta_{a^\star_t} \ge \Delta^{(t)}$. If this statement does not hold at
round $t$, then we must have eliminated all of the the $t$ arms
$\Delta^{(1)}, \ldots, \Delta^{(t)}$. However, since we eliminate
exactly one arm in each round, we can only eliminate $t-1$ arms before
round $t$, which produces a contradiction since $a^\star_t$ is the
maximizer.

\noindent \textbf{Fact 2.} Under the inductive hypothesis, for all $v
\in \Vcal$, we have $\langle\bar{\mu}_t, v^\star - v\rangle \ge
\langle \mu, v^\star - v\rangle$. This is similar to the argument we
used in the fixed confidence proof. For any arm $a$, if $a \notin A_t
\cup R_t$ then the corresponding terms are equal. If $a \in A_t$ then
since by induction we know $a \in v^\star$, the term for $v^\star$ is
as high as possible and analogously if $a \in R_t$ the term for
$v^\star$ is as low as possible.

\noindent \textbf{Fact 3.} $a^\star_t \in \hat{v}_t \iff a^\star_t \in
v^\star$. Assume for the sake of contradiction that $a^\star_t\in
\hat{v}_t$ and $a^\star_t \notin v^\star$. The proof is the same for
the other case. We have
\begin{align*}
\Delta_{a^\star_t} = \min_{a^\star_t \in v^\star \ominus v} \frac{\langle \mu, v^\star -v \rangle}{d(v^\star, v)} \le \frac{\langle \mu, v^\star - \hat{v}_t \rangle}{d( v^\star, \hat{v}_t)}.
\end{align*}
Thus we have $\frac{\langle \mu, \hat{v}_t -
  v^\star\rangle}{d(v^\star, \hat{v}_t)} \leq - \Delta_{a^\star_t}$. By the
previous fact we know $\Delta_{a_t} \geq \Delta^{(t)}$ since $a^\star_t$ is
the maximizer.  Now, conditioned on $\Ecal$:
\begin{align*}
\frac{\langle \hat{\mu}_t, \hat{v}_t - v^\star \rangle}{d(\hat{v}_t, v^\star)} < \frac{\langle \bar{\mu}_t, \hat{v}_t - v^\star \rangle}{d(\hat{v}_t , v^\star)} + c\Delta^{(t)} \leq \frac{\langle \mu, \hat{v}_t - v^\star\rangle}{d(\hat{v}_t, v^\star)} + c\Delta^{(t)} < \Delta^{(t)} - \Delta_{a^\star_t} \le 0.
\end{align*}
The first inequality is by event $\Ecal$, the second is by Fact 2 and
the final one is by Fact 1 and the definition of $a_t^\star$. This results
in a contradiction.

\noindent \textbf{Fact 4.} Let $\tilde{v}_{t,a^\star_t}$ be the set that
witnesses $\hat{\Delta}_{t,a^\star_t}$, i.e. $\tilde{v}_{t,a^\star_t}
= \argmin_{v: a^\star_t \in v\ominus \hat{v}_t} \hat{\Delta}_{t,v}$. We
have that $\langle \hat{\mu}_t, v^\star - \tilde{v}_{t,a^\star_t}
\rangle > 0$. To see why, note that $a^\star_t \in \hat{v}_t \ominus
\tilde{v}_{t,a^\star_t}$ and by Fact 3 we have $a^\star_t \in v^\star
\ominus \tilde{v}_{t,a^\star_t}$. Conditioning on $\Ecal$ and using the
fact that the true gap $\Delta_{a^\star_t}$ involves minimizing over
$v \in \Vcal$ we get
\begin{align*}
\frac{\langle \hat{\mu}_t, v^\star - \tilde{v}_{t,a^\star_t} \rangle}{d(v^\star, \tilde{v}_{t,a^\star_t})} \ge \frac{\langle \bar{\mu}_t, v^\star - \tilde{v}_{t,a^\star_t} \rangle}{d(v^\star, \tilde{v}_{t,a^\star_t})} -c \Delta^{(t)} \geq
\frac{\langle \mu, v^\star - \tilde{v}_{t,a^\star_t} \rangle}{d(v^\star, \tilde{v}_{t,a^\star_t})} - c\Delta^{(t)} > \Delta_{a^\star_t} - \Delta^{(t)} \ge 0.
\end{align*}
The last step here uses Fact 1. 

\noindent \textbf{Fact 5.} $\hat{a}_t \in \hat{v}_t \iff \hat{a}_t \in
v^\star$. Assume for the sake of contradiction that $\hat{a}_t
\in \hat{v}_t, \hat{a}_t \notin v^\star$. We have
\begin{align*}
\hat{\Delta}_{t,\hat{a}_t} = \min_{\hat{a}_t \in \hat{v}_t \Delta v} \frac{\langle \hat{\mu}_t, \hat{v}_t -v \rangle}{d( \hat{v}_t, v)} \le \frac{\langle \hat{\mu}_t, \hat{v}_t - v^\star \rangle}{d(\hat{v}_t, v^\star)}.
\end{align*}
As above, let $\tilde{v}_{t,a^\star_t}$ be the set that witnesses
$\hat{\Delta}_{t,a^\star_t}$. Since $\hat{a}_t$ maximizes
$\hat{\Delta}_{t,a}$ over all surviving arms $a$ and since $a^\star_t$ is
surviving by definition, we have
\begin{align*}
\hat{\Delta}_{t,\hat{a}_t} &\ge \hat{\Delta}_{t,a^\star_t} = \min_{a^\star_t \in \hat{v}_t \ominus v} \frac{\langle \hat{\mu}_t, \hat{v}_t -v \rangle}{d( \hat{v}_t, v)} = \frac{\langle \hat{\mu}_t, \hat{v}_t -\tilde{v}_{t,a^\star_t} \rangle}{d( \hat{v}_t, \tilde{v}_{t,a^\star_t})} \\
&\ge  \frac{\langle \hat{\mu}_t, \hat{v}_t - v^\star \rangle + \langle \hat{\mu}_t, v^\star -\tilde{v}_{t,a^\star_t} \rangle}{d( \hat{v}_t, v^\star) + d(v^\star, \tilde{v}_{t,a^\star_t})} 
\ge \min\left\{\frac{\langle \hat{\mu}_t, \hat{v}_t - v^\star \rangle}{d( \hat{v}_t, v^\star)}, \frac{\langle \hat{\mu}_t, v^\star -\tilde{v}_{t,a^\star_t} \rangle}{d(v^\star, \tilde{v}_{t,a^\star_t})} \right\}\\
&\triangleq \min\{a,b\}.
\end{align*}
The last inequality holds since both terms in the numerator are
non-negative as we have shown above in Fact 4.  Since we previously
upper bounded $\hat{\Delta}_{t,\hat{a}_t}$ by what we are now calling
$a$, we have $a\ge \min\{a, b\}$. If $a \le b$, then all of the
inequalities are actually equalities, so we must have $a = b$. The
other case is that $a > b$, so we can address both cases by
considering $a \geq b$. Expanding the definition and applying the
concentration inequality, we have
\begin{align*}
 b \triangleq \frac{\langle \hat{\mu}_t, v^\star -\tilde{v}_{t,a^\star_t} \rangle}{d(v^\star, \tilde{v}_{t,a^\star_t})} \ge \frac{\langle \mu, v^\star -\tilde{v}_{t,a^\star_t} \rangle}{d(v^\star, \tilde{v}_{t,a^\star_t})} - c\Delta^{(t)} \ge \Delta_{a^\star_t} - c\Delta^{(t)}.
\end{align*}
On the other hand,
\begin{align*}
a \triangleq \frac{\langle \hat{\mu}_t, \hat{v}_t - v^\star \rangle}{d( \hat{v}_t, v^\star)} \le \frac{\langle \mu, \hat{v}_t - v^\star \rangle}{d( \hat{v}_t, v^\star)} + c\Delta^{(t)} \le c\Delta^{(t)}.
\end{align*}
Both of these calculations also require Fact 2. Setting $c = 1/3$, we
have
\begin{align*}
\Delta_{a^\star_t} \le 2c \Delta^{(t)} < \Delta^{(t)},
\end{align*}
which contradicts Fact 1 and the definition of $a_t$.

\textbf{Wrapping up.} To conclude the proof, we proceed by
induction. Clearly the base case that $A_0 \subset v^\star$ and $R_0
\cap v^\star = \emptyset$ is true. Now conditioning on $\Ecal$ and
assuming the inductive hypothesis, we have that by Fact 5, the arm
$\hat{a}_t \in \hat{v}_t \iff \hat{a}_t \in v^\star$. This directly
proves the inductive step since the algorithm's rule for accepting an
arm agrees with $v^\star$.
\end{proof}

\section{Proof of~\pref{thm:refinefixcon}}\label{app:refined_proof}
Recall that in the main concentration argument in~\pref{lem:uniform_martingale}, we proved that
\begin{align*}
\PP\big[\exists t \in \NN, \exists v \in \Vcal, |\langle v^\star - v, \hat{\mu}_t - \frac{1}{t}\sum_{i=1}^{t}\bar{\mu}_i \rangle| \ge \epsilon_t(v,v^\star, \delta)\big] \leq 2 \sum_{t\in\NN} \sum_{v \in \Vcal} \exp\left\{-\frac{t\epsilon_t(v,v^\star, \delta)^2}{8d(v,v^\star)}\right\}.
\end{align*}
Setting
\begin{align*}
\epsilon_t (v,v^\star, \delta) = \sqrt{ \frac{8d(v,v^\star)}{t} \log \frac{|\Bcal(d(v,v^\star), v^\star)|\pi^2 K t^2}{3\delta} },
\end{align*}
we have that the probability of this event is at most
$\delta$. Previously we set each hypothesis to have the same
confidence interval $\Delta_t$ which provided an upper bound on
$\epsilon_t(v,v^\star,\delta)$ for all $v$.
This enabled us to write the disagreement region as a polyhedral set
in $\Vcal$, but to obtain a more refined bound, we would like to use
$\epsilon_t (v,v^\star, \delta)$ directly. However, note that
$\epsilon_t (v,v', \delta) \neq \epsilon_t (v',v, \delta)$ unless the
hypothesis space is symmetric. We symmetrize $\epsilon_t$ by defining
\begin{align*}
D(v, v') \triangleq \max\{\log|\Bcal(d(v,v'), v)|, \log|\Bcal(d(v,v'), v')|\} = D(v', v),
\end{align*}
and the symmetric confidence interval
\begin{align}
\epsilon'_t (v,v', \delta) \triangleq \sqrt{ \frac{8d(v,v')}{t} \left(\log \frac{\pi^2K t^2}{3\delta}  + D(v, v')\right)}. \label{eq:refined_epsilon}
\end{align}
Define the hypothesis complexity measures, for $v \ne v^\star$
\begin{align*}
H^{(1)}_v = \frac{d(v,v^\star)}{\langle \mu, v^\star - v\rangle^2}, & \qquad H^{(2)}_v = \frac{d(v,v^\star) D(v,v^\star)}{\langle \mu, v^\star - v\rangle^2}.
\end{align*}
The arm complexity measures, defined previously, are $H^{(1)}_a =
\max_{a \in v\ominus v^\star} H^{(1)}_v$ and $H^{(2)}_a = \max_{v: a
  \in v \ominus v^\star} H^{(2)}_v$.  The main difference here is that
we are not normalizing by $d(v,v^\star)^2$ as we did in the proof
of~\pref{thm:fixconsc} but rather just $d(v,v^\star)$. In some sense
we replace the term depending on $\Psi$ with $H^{(1)}_a$ and the term
depending on $\Phi$ with $H^{(2)}_a$.

To prove~\pref{thm:refinefixcon}, we construct an inefficient fixed
confidence algorithm, with pseudocode in~\pref{alg:inefffixcon}. The
algorithm is essentially identical to~\pref{alg:main}, except we use
the new definition $\epsilon'$ in the confidence bounds defining the
version space, which forces us to do explicit enumeration. One other
minor difference is that we are now explicitly enforcing monotonicity
of the version space, so we need not use hallucination as we did
before. We now turn to the proof. 

\begin{algorithm}[t]
\caption{Inefficient fixed confidence algorithm}
\label{alg:inefffixcon}
  \begin{algorithmic}[1]
    \State Input: $\Vcal$, set of arms $[K]$, $\delta$
    \State Set $\Vcal_1 = \Vcal$
    \For{$t = 1, 2, 3,\dots$}
    \State $\Acal_t = \emptyset$
    \For{$a\in[K]$}
      \If{$\exists v,v' \in \Vcal_t$ such that $v(a) \ne v'(a)$}
      \State $\Acal_t = \Acal_t \cup a$, query $a$, set $y_t(a) \sim \Ncal(\mu(a),1)$ 
      \EndIf
    \EndFor
      \State Update $\hat{\mu}_t = \frac{1}{t} \sum_{\tau=1}^ty_t$.
      \State $\Rcal_t \gets \{v \in \Vcal_t \mid \exists u \in \Vcal_t, u \ne v, \langle u-v, \hat{\mu}_t \rangle > \epsilon_t'(u, v,\delta)\}$
      \State Update $\Vcal_{t+1} \gets \Vcal_{t} \setminus \Rcal_{t}$
      \State If $|\Vcal_{t+1}| = 1$ \Return the single element $v \in \Vcal_{t+1}$.
    \EndFor
  \end{algorithmic}
\end{algorithm}

\begin{proof}[Proof of~\pref{thm:refinefixcon}]
  In a similar way to~\pref{lem:uniform_martingale} we can prove that
  \begin{align*}
    \PP\left[\forall t, \forall v \in \Vcal_t, |\langle v^\star - v, \hat{\mu}_t - \mu\rangle| > \epsilon_t'(v^\star,v,\delta)\right] \le 2 \sum_t \sum_{v \in \Vcal}\exp\left\{ - \frac{t\epsilon_t'(v,v^\star,\delta)^2}{8d(v,v^\star)}\right\}.
  \end{align*}
  The important thing here is that if $v \in \Vcal_t$ then we must
  query every $a \in v \ominus v^\star$ and moreover since we are
  explicitly enforcing monotonicity (i.e. $\Vcal_{t} \subset
  \Vcal_{t-1}$), we also queried all of these arms in all previous
  rounds. Thus we are obtaining unbiased samples to evaluate these
  mean differences. Using the definition of $\epsilon_t'$
  in~\pref{eq:refined_epsilon}, this probability is at most $\delta$.

Next we prove that when the algorithm terminates, the output is
$v^\star$. We work conditional on the $1-\delta$ event that the
concentration inequality holds. We argue that $v^\star$ is never
eliminated, or formally $v^\star \notin \Rcal_t$ for all $t$. To see
why observe that $\forall v \in \Vcal_{t-1} \neq v^\star $, we have
\begin{align*}
\langle \hat{\mu}_{t}, v-v^\star \rangle \le \langle \mu, v - v^\star \rangle + \epsilon'_t(v, v^\star, \delta).
\end{align*}
This means that no surviving $v \in \Vcal_t$ can eliminate
$v^\star$. This verifies correctness of the algorithm, since $v^\star$
is never eliminated, so it must be the single element in $\Vcal_t$
when the algorithm terminates.

We now turn to the sample complexity. We argue here that if $t > 32
H^{(1)}_a\log(\pi^2Kt^2/(3\delta)) + 32 H^{(2)}_a$ then from round $t$
onwards, arm $a$ will not be sampled again. This condition on $t$
implies that for all $v \in \Vcal$ such that $a \in v \ominus v^\star$,
we have
\begin{align*}
  \epsilon'_{t}(v^\star, v, \delta)&\triangleq \sqrt{\frac{8 d(v,v^\star)}{t}\left(\log(\pi^2Kt^2/(3\delta)) + D(v,v^\star)\right)} < \langle \mu, v^\star - v\rangle/2,
\end{align*}
by the definitions of $H^{(1)}_a$ and $H^{(2)}_a$. Using this simpler
fact we argue that $a$ cannot be sampled again. Working toward a
contradiction, assume that $a$ is sampled at round $t+1$, which means
there exists two hypotheses $v_1,v_2 \in \Vcal_{t+1}$ such that
$v_1(a) \ne v_2(a)$. Since $v^\star \in \Vcal_{t+1}$ this implies that
there exists $v \in \Vcal_{t+1}$ such that $v^\star(a) \ne v(a)$. But we clearly have
\begin{align*}
\langle v^\star - v, \hat \mu_{t} \rangle \ge \langle v^\star - v, \mu \rangle - \epsilon'_{t}( v^\star, v, \delta) > \epsilon'_{t}( v^\star, v, \delta)
\end{align*}
which is a contradiction since $v$ must have been eliminated at round
$t$. This proves that $a \notin \Acal_{t+1}$, and since $\Vcal_{t+1}$
is monotonically shrinking, so is $\Acal_t$, which means that $a$ is never sampled again.





To summarize, we have now shown that for each arm $a$, the arm will be
sampled at most $t_a$ times, where $t_a$ is the smallest integer
satisfying
\begin{align*}
t_a \ge 32 H^{(1)}_a \log(\pi^2Kt_a^2/(3\delta)) + 32 H^{(2)}_a.
\end{align*}
The final result now follows from an application of~\pref{fact:transcendental}.
\end{proof}

\bibliographystyle{plainnat}
\bibliography{bibliography}

\end{document}